\documentclass{article}

\usepackage{microtype}
\usepackage{graphicx}
\usepackage{subcaption}
\usepackage{booktabs} 

\usepackage{hyperref}
\usepackage{color}
\usepackage{tabularray}
\usepackage{xcolor}
\usepackage{tcolorbox}

\usepackage[accepted]{icml2026}

\usepackage{amsmath}
\usepackage{amssymb}
\usepackage{mathtools}
\usepackage{amsthm}
\usepackage{enumitem}
\usepackage{tikz}
\usetikzlibrary{positioning, arrows.meta, shapes.geometric}

\usepackage[capitalize,noabbrev]{cleveref}

\theoremstyle{plain}
\newtheorem{theorem}{Theorem}[section]
\newtheorem{proposition}[theorem]{Proposition}

\theoremstyle{definition}

\theoremstyle{remark}

\newcommand{\R}{\mathbb{R}}
\newcommand{\E}{\mathbb{E}}

\newcommand{\state}[1][t]{s_{#1}}
\newcommand{\tstate}[1][t]{\tilde{s}_{#1}}
\newcommand{\statenext}{\state[t+1]}
\newcommand{\traj}[1][0:T]{s_{#1}}
\newcommand{\ttraj}[1][0:T]{\tilde{s}_{#1}}
\newcommand{\trajcurr}[1][t]{\traj[0:#1]}
\newcommand{\trajfuture}[1][t]{\traj[#1:T]}
\newcommand{\action}{a_t}
\newcommand{\reward}{r}
\newcommand{\network}{f}
\newcommand{\networktrain}{\network_\param}
\newcommand{\valuefunc}{V}
\newcommand{\valuetrain}{\valuefunc_\param}
\newcommand{\statespace}{\mathcal{S}}
\newcommand{\actionspace}{\mathcal{A}}
\newcommand{\policy}{\pi}
\newcommand{\policyopt}{\pi_*}
\newcommand{\param}{\theta}
\newcommand{\policytrain}{\pi_\param}
\newcommand{\policycond}{\policy(\action \mid \state)}
\newcommand{\KL}{\mathbb{KL}}
\newcommand{\resamplingcrit}{c}
\newcommand{\loss}{\mathcal{L}}

\newcommand{\appref}[1]{Appendix~\ref{#1}}

\DeclareMathOperator*{\argmax}{arg\,max}
\DeclareMathOperator*{\argmin}{arg\,min}

\usepackage[textsize=tiny]{todonotes}

\icmltitlerunning{Agentic Monte Carlo: Simulating Reinforcement Learning for Black-Box Agents}

\begin{document}

\twocolumn[
  \icmltitle{Agentic Monte Carlo: Simulating Reinforcement Learning for Black-Box Agents}



  \icmlsetsymbol{equal}{*}

  \begin{icmlauthorlist}
    \icmlauthor{Dae Yon Hwang}{L6}
    \icmlauthor{Raunaq Suri}{L6}
    \icmlauthor{Valentin Villecroze}{L6}
    \icmlauthor{Anthony L. Caterini}{L6}\\
    \icmlauthor{Jesse C. Cresswell}{L6}
    \icmlauthor{No\"el Vouitsis}{L6}
    \icmlauthor{Brendan Leigh Ross}{L6}

  \end{icmlauthorlist}

  \icmlaffiliation{L6}{Layer 6 AI, Toronto, Canada}

  \icmlcorrespondingauthor{Dae Yon Hwang}{daeyon@layer6.ai}
  \icmlcorrespondingauthor{Brendan Leigh Ross}{brendan @layer6.ai}

  \icmlkeywords{Agents, Large Language Models, Reinforcement Learning, Sequential Monte Carlo}

  \vskip 0.3in
]



\printAffiliationsAndNotice{}  

\begin{abstract}
  
LLM agents operate in two distinct regimes: open-weight agents amenable to reinforcement learning (RL) and black-box agents whose behaviour must be controlled purely at test time. Although black-box agents are often backed by state-of-the-art proprietary LLMs, API-only access precludes parameter-level optimization, rendering most RL methods inapplicable. To address this limitation, we turn to a known equivalence between RL and Bayesian inference. We propose Agentic Monte Carlo (AMC) to directly sample from the optimal policy of a black-box agent rather than training it through RL. The optimal policy is a posterior over trajectories whose prior we define as the fixed black-box LLM agent. We employ Sequential Monte Carlo to sample from this posterior by learning a value function to steer the agent while leaving the underlying black-box model unchanged. We validate AMC on three diverse environments from the AgentGym benchmark, demonstrating significant improvements over prompting baselines and even outperforming Group Relative Policy Optimization (GRPO) as we scale the test-time compute of our method. AMC demonstrates the feasibility of performing principled RL-style optimization of black-box LLM agents.\footnote{Code is available at {\url{https://github.com/layer6ai-labs/Agentic-Monte-Carlo}}}

\end{abstract}

\section{Introduction}\label{sec:intro}

\begin{figure}[htbp]
    \centering
    \includegraphics[width=0.48\textwidth]{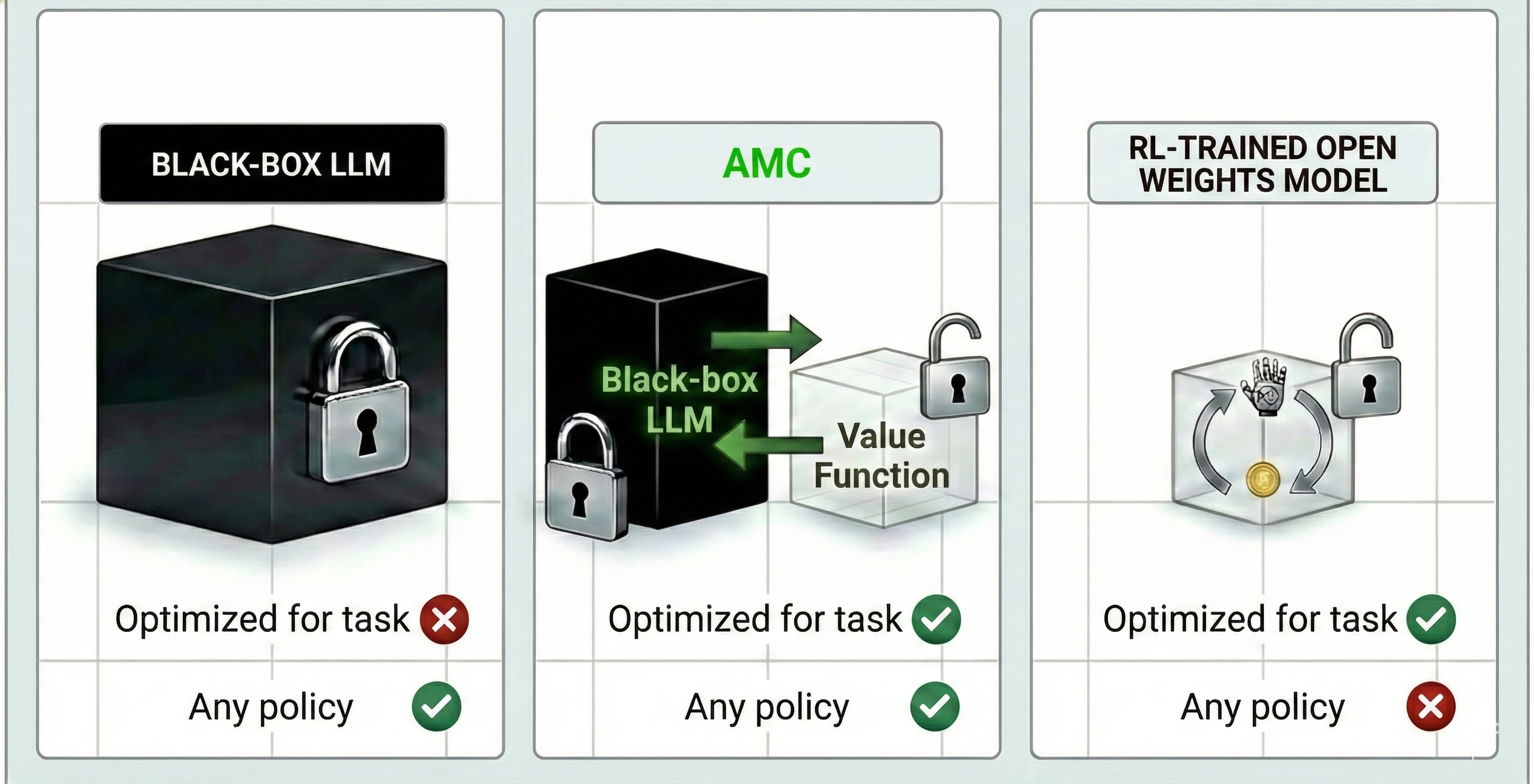}
    \caption{Comparison between AMC and other agentic paradigms. AMC facilitates task-specificity for black-box LLM policies using a learned lightweight value function. On the other hand, training a white-box model with RL imposes constraints on the choice and scale of the base policy.}
    \label{fig:amc_visual_comparison}
    \vspace{-8pt}
\end{figure}

Large Language Models (LLMs) have evolved from passive text generators to active autonomous agents capable of multi-step reasoning and environment interaction \cite{yao2022react, schick2023toolformer}. To operationalize these agents, Reinforcement Learning (RL) \cite{sutton1998reinforcement, christiano2017deep, ziegler2019fine} has become the dominant training paradigm through methods like Proximal Policy Optimization (PPO) \cite{schulman2017proximal, ouyang2022training} and, more recently, Group Relative Policy Optimization (GRPO) \cite{shao2024deepseekmathpushinglimitsmathematical}. These algorithms have proven highly effective for open-weight models, enabling abilities ranging from complex mathematical reasoning to software engineering \cite{wang2025reinforcement, pennino2025reasoning}. However, these approaches rely on a fundamental assumption: white-box access to model parameters to compute policy gradients. This requirement presents a fundamental barrier for the most capable state-of-the-art models—such as GPT-5 \cite{openai2025gpt5}, Gemini 3 \cite{deepmind_gemini3pro_2025}, and Claude 4.6 \cite{anthropic_claude_opus46_2026}—which are predominantly available only as black-box APIs. Consequently, researchers seeking to optimize agents backed by these proprietary models are often restricted to prompt engineering or fine-tuning open-weight surrogates, neither of which performs RL-based optimization of the target black-box model.

To bridge this gap, we revisit the known duality between RL and Bayesian inference \citep{levine2018reinforcement,korbak2022rl}. This framework treats the goal of maximizing rewards using RL as equivalent to probabilistic inference of the agent's optimal policy. Here, the optimal policy is defined as a Bayesian posterior distribution over agent trajectories. Specifically, this posterior is proportional to a prior distribution—given by the pre-trained black-box model—multiplied by a likelihood term representing the probability of achieving high rewards (i.e., of being optimal). This formulation is particularly attractive for black-box agents because it reframes the learning problem: instead of updating the parameters of the prior (which we cannot access), we can simply sample from the posterior distribution to recover optimal behaviour. However, exact sampling from this optimal posterior is impossible for black-box agents, whose full log-probabilities are generally unavailable, and intractable even for white-box agents because of the high-dimensional action space and long time horizons inherent in agentic environments.

In this work, we propose Agentic Monte Carlo (AMC), a novel framework that makes this sampling tractable by leveraging Sequential Monte Carlo (SMC) methods—a form of importance sampling applied to sequential data \cite{Doucet2001}. At each step of the agent's trajectory, AMC samples actions directly from the black-box prior and re-weights them based on expected rewards. To predict expected rewards, we train a separate value function to effectively steer the agent toward optimality without ever modifying the underlying black-box language model. As we highlight in \autoref{fig:amc_visual_comparison}, AMC provides a useful interpolation between prompting a static black-box agent and fine-tuning a smaller white-box one.

We validate AMC using the AgentGym benchmark \cite{agentgym}, evaluating performance in three diverse environments: WebShop (e-commerce), SciWorld (scientific experiments), and TextCraft (Minecraft-style crafting). Our experiments demonstrate that AMC consistently outperforms prompting baselines and, as we scale the test-time compute of our method, even outperforms GRPO baselines that require full parameter access.

Our key contributions are as follows:
\begin{enumerate}
    \item We formulate the problem of RL for black-box agents through the lens of Bayesian inference, defining the optimal policy as a Bayesian posterior over trajectories that combines the black-box prior with a likelihood term that encodes optimal agent behaviour.

    \item We propose Agentic Monte Carlo, an algorithm that uses SMC with sequential importance resampling to sample from this optimal policy. We derive a weight update rule that leverages a learned value function to guide the black-box agent's behaviour toward optimality.

    \item We empirically validate the effectiveness of our approach compared to both prompting and RL baselines, demonstrating that rigorous RL concepts can be applied to closed-source agents without gradient access. These results position AMC with black-box models as a viable alternative to gradient-based RL in GPU-constrained settings.
\end{enumerate}

\section{Agentic Monte Carlo}
\label{sec:method}
This section builds up to our method, starting from preliminaries around reinforcement learning. In \autoref{sec:background_rl}, we introduce the reinforcement learning setting and formulate it as a Bayesian inference problem. We outline how sequential Monte Carlo (SMC) can be used as a tool to solve this problem in \autoref{sec:bayesian_inference}. Then, in \autoref{sec:learning_value} and \autoref{sec:amc_method}, we describe Agentic Monte Carlo (AMC): our approach for adapting SMC to potentially black-box agents by learning an ancillary value function.

\subsection{Reinforcement Learning \& Bayesian Inference}
\label{sec:background_rl}

\begin{figure}[t]
\centering
\definecolor{trajBlue}{RGB}{210, 230, 255}
\definecolor{trajPurple}{RGB}{230, 210, 255}
\definecolor{trajYellow}{RGB}{255, 250, 210}
\resizebox{\columnwidth}{!}{
\begin{tikzpicture}[
    node distance=0.4cm and 0.2cm,
    state/.style={circle, draw, inner sep=0pt, font=\small}, 
    vbox/.style={rectangle, draw, minimum width=8.4cm, minimum height=0.4cm, font=\scriptsize\sffamily, 
                 fill=gray!10},
    arrow/.style={-Stealth, thick},
    v_arrow/.style={-Stealth, dash pattern=on 1.5pt off 1.5pt, semithick, draw=black!60},
    history_line/.style={-Stealth, semithick, dash pattern=on 1.5pt off 1.5pt, draw=black!70}, 
    w/.style={font=\scriptsize, color=black!80} 
]

    \node[state, fill=white, minimum size=0.55cm] (s0_1) at (-2.7, 0) {$s_0^{(1)}$};
    \node[state, fill=white, minimum size=0.55cm] (s0_2) at (-0.2, 0) {$s_0^{(2)}$};
    \node[state, fill=white, minimum size=0.55cm] (s0_3) at (2.3, 0) {$s_0^{(3)}$};
    
    \node[w, left=0.1cm of s0_1] (w0_1) {$\frac{1}{N}$};
    \node[w, right=0.1cm of s0_2] (w0_2) {$\frac{1}{N}$};
    \node[w, right=0.1cm of s0_3] (w0_3) {$\frac{1}{N}$};

    \node[vbox, below=0.45cm of s0_2] (v0) {$s_1^{(i)} \sim \pi(s_1 \mid s_0^{(i)})$};
    \draw[arrow] (s0_1) -- ([xshift=-2.5cm]v0.north);
    \draw[arrow] (s0_2) -- (v0.north);
    \draw[arrow] (s0_3) -- ([xshift=2.5cm]v0.north);

    \node[state, fill=trajBlue, minimum size=0.55cm] (s1_1) at ([xshift=-2.5cm, yshift=-0.7cm]v0.south) {$s_1^{(1)}$};
    \node[state, fill=trajPurple, minimum size=0.55cm] (s1_2) at ([xshift=0cm, yshift=-0.7cm]v0.south) {$s_1^{(2)}$};
    \node[state, fill=trajYellow, minimum size=0.55cm] (s1_3) at ([xshift=2.5cm, yshift=-0.7cm]v0.south) {$s_1^{(3)}$};

    \draw[arrow] ([xshift=-2.45cm]v0.south) -- (s1_1.north);
    \draw[arrow] ([xshift=0.05cm]v0.south) -- (s1_2.north);
    \draw[arrow] ([xshift=2.55cm]v0.south) -- (s1_3.north);

    \draw[v_arrow] (s1_1.west) -- ++(-0.8,0) node[left, w] (w1_1) {$w_1^{(1)}$};
    \node[font=\tiny, anchor=south] at ([xshift=-0.4cm]s1_1.west) {$V_{\theta}(\cdot)$};
    \path[history_line] (w0_1) edge (w1_1);

    \draw[v_arrow] (s1_2.east) -- ++(0.8,0) node[right, w] (w1_2) {$w_1^{(2)}$};
    \node[font=\tiny, anchor=south] at ([xshift=0.4cm]s1_2.east) {$V_{\theta}(\cdot)$};
    \path[history_line] (w0_2) edge (w1_2);

    \draw[v_arrow] (s1_3.east) -- ++(0.8,0) node[right, w] (w1_3) {$w_1^{(3)}$};
    \node[font=\tiny, anchor=south] at ([xshift=0.4cm]s1_3.east) {$V_{\theta}(\cdot)$};
    \path[history_line] (w0_3) edge (w1_3);

    \node[vbox, below=0.65cm of s1_2] (v1) {$s_2^{(i)} \sim \pi(s_2 \mid s_1^{(i)})$};
    \draw[arrow] (s1_1) -- ([xshift=-2.5cm]v1.north);
    \draw[arrow] (s1_2) -- (v1.north);
    \draw[arrow] (s1_3) -- ([xshift=2.5cm]v1.north);

    \node[state, fill=trajBlue, minimum size=0.5cm] (s2_1) at ([xshift=-2.5cm, yshift=-0.7cm]v1.south) {$s_2^{(1)}$};
    \node[state, fill=trajPurple, minimum size=0.8cm] (s2_2) at ([yshift=-0.8cm]v1.south) {$s_2^{(2)}$};
    \node[state, fill=trajYellow, minimum size=0.65cm] (s2_3) at ([xshift=2.5cm, yshift=-0.7cm]v1.south) {$s_2^{(3)}$};

    \draw[arrow] ([xshift=-2.5cm]v1.south) -- (s2_1.north);
    \draw[arrow] (v1.south) -- (s2_2.north);
    \draw[arrow] ([xshift=2.5cm]v1.south) -- (s2_3.north);

    \draw[v_arrow] (s2_1.west) -- ++(-0.8,0) node[left, w] (w2_1) {$w_2^{(1)}$};
    \node[font=\tiny, anchor=south] at ([xshift=-0.4cm]s2_1.west) {$V_{\theta}(\cdot)$};
    \path[history_line] (w1_1) edge (w2_1);

    \draw[v_arrow] (s2_2.east) -- ++(0.8,0) node[right, w] (w2_2) {$w_2^{(2)}$};
    \node[font=\tiny, anchor=south] at ([xshift=0.4cm]s2_2.east) {$V_{\theta}(\cdot)$};
    \path[history_line] (w1_2) edge (w2_2);

    \draw[v_arrow] (s2_3.east) -- ++(0.8,0) node[right, w] (w2_3) {$w_2^{(3)}$};
    \node[font=\tiny, anchor=south] at ([xshift=0.4cm]s2_3.east) {$V_{\theta}(\cdot)$};
    \path[history_line] (w1_3) edge (w2_3);

    \node[vbox, below=0.75cm of s2_2] (v2) {$s_3^{(i)} \sim \pi(s_3 \mid s_2^{(i)})$};
    \draw[arrow] (s2_1) -- ([xshift=-2.5cm]v2.north);
    \draw[arrow] (s2_2) -- (v2.north);
    \draw[arrow] (s2_3) -- ([xshift=2.5cm]v2.north);

    \node[state, fill=trajBlue, minimum size=0.45cm] (s3_1) at ([xshift=-2.5cm, yshift=-0.7cm]v2.south) {$s_3^{(1)}$};
    \node[state, fill=trajPurple, minimum size=1.0cm] (s3_2) at ([yshift=-1.0cm]v2.south) {$s_3^{(2)}$};
    \node[state, fill=trajYellow, minimum size=0.75cm] (s3_3) at ([xshift=2.5cm, yshift=-0.7cm]v2.south) {$s_3^{(3)}$};

    \draw[arrow] ([xshift=-2.5cm]v2.south) -- (s3_1.north);
    \draw[arrow] (v2.south) -- (s3_2.north);
    \draw[arrow] ([xshift=2.5cm]v2.south) -- (s3_3.north);

    \draw[v_arrow] (s3_1.west) -- ++(-0.8,0) node[left, w] (w3_1) {$w_3^{(1)}$};
    \node[font=\tiny, anchor=south] at ([xshift=-0.4cm]s3_1.west) {$V_{\theta}(\cdot)$};
    \path[history_line] (w2_1) edge (w3_1);

    \draw[v_arrow] (s3_2.east) -- ++(0.8,0) node[right, w] (w3_2) {$w_3^{(2)}$};
    \node[font=\tiny, anchor=south] at ([xshift=0.4cm]s3_2.east) {$V_{\theta}(\cdot)$};
    \path[history_line] (w2_2) edge (w3_2);

    \draw[v_arrow] (s3_3.east) -- ++(0.8,0) node[right, w] (w3_3) {$w_3^{(3)}$};
    \node[font=\tiny, anchor=south] at ([xshift=0.4cm]s3_3.east) {$V_{\theta}(\cdot)$};
    \path[history_line] (w2_3) edge (w3_3);

    \node[draw, rectangle, dotted, thick, minimum width=8.2cm, minimum height=1.6cm, below=0.6cm of s3_2] (resample_box) {};
    
    \node[font=\small\sffamily\bfseries, anchor=north west] at ([yshift=-0.05cm, xshift=0cm]resample_box.north west) {\scriptsize Resampling Step};
    \node[font=\scriptsize, anchor=north east] at ([yshift=-0.1cm, xshift=-1.5cm]resample_box.north east) {$w_3^{(2)} > w_3^{(3)} > w_3^{(1)}$};

    \node[state, fill=trajPurple, minimum size=0.6cm] (r1) at ([xshift=-2.4cm, yshift=-0.25cm]resample_box.center) {$s_3^{(1)}$};
    \node[state, fill=trajPurple, minimum size=0.6cm] (r2) at ([xshift=-0.1cm, yshift=-0.25cm]resample_box.center) {$s_3^{(2)}$};
    \node[state, fill=trajYellow, minimum size=0.6cm] (r3) at ([xshift=2.6cm, yshift=-0.25cm]resample_box.center) {$s_3^{(3)}$};

    \draw[arrow] (s3_2.south) -- (r1.north);
    \draw[arrow] (s3_2.south) -- (r2.north);
    \draw[arrow] (s3_3.south) -- (r3.north);

    \node[w, left=0.1cm of r1] (rf_1) {$\frac{1}{N}$};
    \node[w, right=0.1cm of r2] (rf_2) {$\frac{1}{N}$};
    \node[w, right=0.1cm of r3] (rf_3) {$\frac{1}{N}$};

\end{tikzpicture}
}
\caption{
\looseness=-1 Visual representation of AMC for $N=3$ trajectories. Importance weights $w_t^{(i)}$ are determined using the value function $V_{\theta}$, where lower-weighted trajectories (e.g., $s^{(1)}$) are more likely to be pruned than higher-weighted ones ($s^{(2)}, s^{(3)}$) during resampling.}
\label{fig:amc}
\vspace{-2em}
\end{figure}

We begin our journey in a standard reinforcement learning setting in which an agent, defined by a policy $\policy$, responds to its observed state $\state \in \statespace$ at each timestep $t$ by producing an action $\action \sim \policycond$, $\action \in \actionspace$. The selected action results in a new state $\state[t+1] \sim p(\state[t+1] \mid \state, \action)$ determined by the transition dynamics of the agent's environment. We assume discrete state and action spaces. Preferences over states are encoded by a scalar reward function $\reward(\state) \in \R$ given by the environment, where higher rewards are preferred.

Over a time horizon $T$, the agent $\policy$ produces a trajectory $\traj \sim \policy(\traj)$.\footnote{Since our policy of interest $\policy$ will be notionally intractable, we fold transition dynamics into our concept of the policy, ignore actions, and focus on the resulting state transitions $\policy(\statenext \mid \state)$ throughout.} An optimal agent should produce trajectories that have large cumulative reward values denoted by the shorthand $\reward(\traj) = \sum_{t=0}^T \reward(\state)$.\footnote{We omit discount factors here.}

For many tasks and environments of interest, it has become standard to initialize policies using pre-trained LLMs given their rich semantic understanding and broad world knowledge. Our goal is therefore to fine-tune an LLM agent for a particular environment. Policy gradient methods \cite{williams1992simple} have become a dominant training paradigm with LLMs: the policy is trained to maximize expected rewards, usually with a KL-divergence penalty term against a reference policy \cite{ouyang2022training, shao2024deepseekmathpushinglimitsmathematical},
\begin{equation}\label{eq:variational_inference}
    \policyopt = \argmax_{\policytrain} \, \E_{\policytrain(\traj)} \!\left[\reward(\traj)\right]\! - \!\beta \KL\left[\policytrain\, \Vert\,  \policy\right],
\end{equation}
where $\policytrain$ is the policy being trained, $\policy$ is the reference model given by a pre-trained LLM, and $\beta$ is a regularization coefficient. Although $\policytrain$ is usually initialized from $\policy$, the subscript $\param$ identifies $\policytrain$ as being a separate model with trainable parameters, while $\policy$ itself remains fixed.

\looseness=-1
As highlighted by \citet{korbak2022rl} (and for completeness, proven in \appref{app:theory_equivalence}), \autoref{eq:variational_inference} can be understood as performing variational inference to approximate the following (intractable) posterior:
\begin{equation}\label{eq:bayesian_inference}
    \policyopt(\traj) \propto \policy(\traj) e^{\reward(\traj)/\beta}.
\end{equation}
\autoref{eq:bayesian_inference} frames KL-regularized RL as a Bayesian inference problem where the reference policy's probabilities $\policy(\traj)$—which serve as the prior—are modulated to upsample high-reward trajectories as encoded by the likelihood $e^{\reward(\traj)/\beta}$. Note that evaluating the right-hand sides of \autoref{eq:variational_inference} and \autoref{eq:bayesian_inference} yields the same optimal policy: the posterior $\policyopt$.

By framing KL-regularized RL as Bayesian inference, we can look beyond variational inference toward the broader Bayesian toolbox. While variational inference (the standard approach) estimates the posterior by training a parameterized policy $\policytrain$, Monte Carlo methods offer a sampling-based alternative that could bypass policy optimization entirely, allowing us to simulate reinforcement learning even when $\policy$ is a black box. We exploit this shift in perspective to show that $\policyopt$ can be estimated by guiding a static prior policy $\policy$ with a simple auxiliary model. This approach provides a viable pathway for reinforcement learning in scenarios where fine-tuning the reference policy $\policy$ is intractable or impossible.

\subsection{Bayesian Inference via Sequential Monte Carlo}\label{sec:bayesian_inference}

Sequential Monte Carlo \citep[SMC;][]{gordon1993novel, Doucet2001} is a group of Monte Carlo methods for estimating posteriors of sequentially evolving systems conditioned on step-wise observations. The sequential nature of our state transitions $\policy(\state[t+1] \mid \state)$ makes SMC particularly well-suited to the RL setup described in \autoref{sec:background_rl}. For example, \citet{piche2018probabilistic} use SMC to perform planning with a trained policy and value function, and \citet{zhao2024probabilistic} use SMC along with a learned ``twist'' function for constrained text generation using LLMs. Since the nature of many digital environments allows us to simulate agent trajectories in parallel, we can use SMC to directly structure a Monte Carlo simulation of an agent running the optimal policy $\policyopt$.

\looseness=-1
The most standard SMC method is the bootstrap filter \citep{gordon1993novel,Doucet2001}, i.e., sequential importance resampling (SIR). At a high level, SIR provides a principled way to sample several trajectories $\{\traj^{(i)}\}_{i=1}^N$ in parallel by running our  (black-box or otherwise intractable) prior $\policy$, while intermittently \emph{resampling}: stochastically pruning the worst and proliferating the best trajectories in order to steer the overall sample towards the posterior $\policyopt$. Fundamentally, SIR is an importance sampling algorithm, and importance weights will be used to guide these resampling steps.

\looseness=-1
Consider a \emph{partial} agent trajectory $\trajcurr$ up to timestep $t$. Since it is intractable to sample trajectories from $\policyopt$, importance sampling instead samples from a tractable \emph{proposal} policy and then corrects for the policy mismatch using corresponding importance weights. We know that it \emph{is} tractable to sample from our (fixed) LLM prior $\pi$, so we can sample trajectories $\{\trajcurr^{(i)}\}_{i=1}^N \sim \policy$ and then reweight them with importance weights proportional to the ratio of their likelihoods under both \(\policyopt\) and $\pi$:
\begin{equation}\label{eq:importance_weight_def}
  w_t = \frac{\policyopt(\trajcurr)}{\policy(\trajcurr)}.
\end{equation}
In order to use these importance weights for resampling, SIR requires a recursive decomposition of $w_t$ in terms of the previous weight, $w_{t-1}$. In \appref{app:theory_decomp}, we show that
\begin{align}
\begin{aligned}\label{eq:importance_weight_recursive}
  w_t &= w_{t-1}\cdot \frac{\E_{\policy(\trajfuture[t+1] \mid \state)}[e^{\frac{1}{\beta}\sum_{\tau=t}^T \reward(\state[\tau])}]}
  {\E_{\policy(\trajfuture \mid \state[t-1])}[e^{\frac{1}{\beta}\sum_{\tau=t}^T \reward(\state[\tau])}]},\\
  &= w_{t-1} \cdot e^{\valuefunc(\state) - \valuefunc(\state[t-1]) + \frac{\reward(\state[t-1])}{\beta}},
\end{aligned}
\end{align}
where we define $\valuefunc(\state) = \log\E_{\policy(\trajfuture[t+1] \mid \state)}[e^{\frac{1}{\beta}\sum_{\tau=t}^T \reward(\state[\tau])}]$. $\valuefunc$ is known as as the \emph{soft value function} in maximum entropy RL because its log-sum-exp structure emulates a soft maximum reward value over trajectories \citep{haarnoja2017reinforcement,haarnoja2018soft}. We refer the reader to \citet{levine2018reinforcement} for more background. For now, let us assume that we can compute $\valuefunc$. 
This recursive decomposition allows us to perform SIR \citep{Doucet2001}, which we depict visually in \autoref{fig:amc}. The full algorithm, \autoref{alg:sir}, is relegated to \appref{app:sir_alg} due to space constraints.

\looseness=-1
SIR runs $N$ trajectories in parallel and, at each step $t$, updates their respective importance weights $w_t^{(i)}$. At certain timesteps, we \textit{resample}, with replacement, a new subset of trajectories $\{\trajcurr^{(i)}\}_{i=1}^N$ according to their corresponding importance weights $\{w_t^{(i)}\}_{i=1}^N$, and then reset their importance weights to be uniform. The decision of whether to resample at a given timestep is made by an arbitrary boolean resampling criterion $\resamplingcrit(t, \{w_t^{(i)}\}_{i=1}^N)$, which in experiments we simply set to \texttt{true} only on a cross-validated subset of timesteps. Resampling criteria are further explored in \appref{app:resampling}.

Upon completion of SIR, the weighted empirical measure of the resulting trajectories $\{\traj^{(i)}\}_{i=1}^N$ and weights $\{w_T^{(i)}\}_{i=1}^N$ converges weakly to $\policyopt$ as $N \to \infty$ \cite{Doucet2001}. Put more simply, sampling from the categorical distribution induced by the normalized weights approximates a sample from $\policyopt$, and does so better as $N$ gets larger.

Until now, we assumed that we could compute the value function $\valuefunc$, which is an expectation involving future rewards $\{\reward(\state[\tau])\}_{\tau=t}^T$. Estimating this expectation precisely would require multiple simulations of our agent until the terminal timestep $T$, which can quickly become computationally prohibitive. Instead, we opt to \emph{learn} $\valuefunc$, a procedure we describe in the following section.

\subsection{Learning Importance Weights for Agentic Systems}\label{sec:learning_value}

We now shift our focus to estimating the value function $\valuefunc$ to then efficiently compute importance weights when running SIR. We aim to learn a model $\valuetrain$ to approximate $\valuefunc$. Because $\policy$ is a frozen prior, we can regress our value function directly on a set of Monte Carlo trajectories from $\policy$, optimizing for the true value of $\valuefunc$ by way of the following relationship. We assume there exists a set of weights for $\valuetrain$ that gives rise to $\valuefunc$ as
\begin{equation}\label{eq:loss_func_ideal}\small
    \valuefunc{=} \argmin_{\valuetrain}\E_{p(t, \state)}\!\left\lVert \valuetrain(\state){-}\log \E_{\policy(\trajfuture[t+1] \mid \state)} \!\!\left[e^{\frac{1}{\beta} \sum_{\tau=t}^T{r}(\state[\tau])}\right]\!\right\rVert_2^2, \\
\end{equation}
where $p(t, \state)$, the distribution of training inputs, is any distribution with full support over timesteps $t$ and states $s_t$. We can then optimize using gradient-based methods. 

\looseness=-1
In our setting, $\reward(\state)$ is available at test time, so we only need to learn future rewards. We thus parameterize $\valuetrain$ as
\begin{equation}\label{eq:value_parameterization}
    \valuetrain(\state) = \networktrain(\state) + \reward(\state),
\end{equation}
where $\networktrain$ is a transformer-based model with a regression head that outputs a scalar prediction of $\reward(s_{t+1:T})$ (if $\reward$ were not available at test-time, we would parameterize $\valuetrain$ directly). In practice, we estimate the outer expectation by first sampling $M$ trajectories $\{\traj^{(j)}\}_{j=1}^M$ from $\policy$ and, from these trajectories, collating a subset of individual states and corresponding timesteps to use as our training set $\{(t_k, \state[t_k]^{(k)})\}_{k=1}^P$. We approximate the inner expectation by a single trajectory, thus accepting some bias through the logarithm to arrive at an estimator for the (non-soft) value function as the regression target:
\begin{equation}\label{eq:loss_func_real}
    \loss(\networktrain)\! =\!\frac{1}{P}\sum_{k=1}^P\!\left\lVert \networktrain(\state[t_k]^{(k)}) -\sum_{\tau=t_k+1}^T\reward(\state[\tau]^{(k)})\right\rVert_2^2, \\
\end{equation}
where we set $\beta=1$ for training and adjust the sampling temperature post hoc. It is with this trained network that we model $\valuetrain$, which then estimates the weight updates needed for SIR in \autoref{alg:sir}. In \appref{app:value_estimation}, we show that training with \autoref{eq:loss_func_real} performs well in comparison to the more expensive and complex \autoref{eq:loss_func_ideal}.

\subsection{Agentic Monte Carlo}\label{sec:amc_method}
To summarize our method—which we refer to as Agentic Monte Carlo (AMC)—we begin with a black-box (or otherwise intractable) LLM policy $\policy$ that forms a presumably good prior for a given task and environment of interest. This is common, for example, with today's state-of-the-art, generalist models accessible only through APIs (e.g., GPT-5, Gemini 3, Claude 4.6), for which we are limited to prompting methods to steer the agent's behaviour.

Ideally, we would like to be able to fine-tune these agents using standard RL algorithms to estimate $\policyopt$ from \autoref{eq:variational_inference}, but this feat is impossible due to their black-box nature. Instead, AMC turns to probabilistic inference to approximate $\policyopt$ using SMC.

Before running SMC, we must build our value function. We collect a series of trial trajectories from the prior $\{\traj^{(j)}\}_{j=1}^M \sim\policy(\traj)$ and use these trajectories to train $\networktrain$ according to \autoref{eq:loss_func_real}. We parameterize our value function $\valuetrain$ according to \autoref{eq:value_parameterization}.

We next use SMC, depicted in \autoref{fig:amc}, to generate an approximate set of samples from $\policyopt$ using \autoref{alg:sir}, where we substitute the true value function $\valuefunc$ with our learned value function $\valuetrain$ in the importance weight computation step. This gives us a final set of trajectories $\{\traj^{(i)}\}_{i=1}^N$ and weights $\{w_T^{(i)}\}_{i=1}^N$ that together approximate $\policyopt$. In practice—and in our experiments—we may only be interested in selecting a single best trajectory. We can simply keep the trajectory with the highest cumulative reward or, if $\reward$ is unavailable at test time, the trajectory $i$ with the largest  weight $w_T^{(i)}$.

\section{Related Work}
\label{sec:related-work}

\noindent \textbf{Bayesian Inference, Control-as-Inference, and SMC.} The connections we exploit between KL-regularized RL and Bayesian inference \citep{korbak2022rl} are grounded in the control-as-inference framework \cite{dayan1997using, toussaint2006probabilistic, levine2018reinforcement}, which casts optimal decision-making as probabilistic inference. \citet{piche2018probabilistic} pioneered the use of SMC for planning in this domain, training policy proposals to approximate the optimal posterior. Building on this, \citet{lioutas2023criticsequentialmontecarlo} introduced CriticSMC, identifying that standard SMC suffers from sample impoverishment in sparse-reward settings; they proposed weighting particles using a learned soft Q-function to steer sampling toward high-value regions. Although AMC shares theoretical roots with these two works, they are designed for continuous control tasks and deterministic transition dynamics, respectively. AMC adapts these principles to the discrete, semantic state space of black-box LLM agents, where gradients are inaccessible and state transitions are defined by opaque—possibly stochastic—tool interactions, rather than physics engines.

\noindent \textbf{SMC for Language Models.} Recent research has begun applying SMC to steer LLM generation toward desired constraints \cite{lew2023sequential, zhao2024probabilistic, loula2025syntactic}. However, these methods modulate the proposal distribution via access to the LLM's logits, a requirement that is impossible to satisfy with proprietary black-box models; in contrast, AMC is designed specifically for black-box settings. Furthermore, while these works focus on constrained text generation, AMC targets sequential agentic workflows involving external tool interaction. Closer to our work is ``Rollout Roulette'' by \citet{puri2025rollout}, which applies SMC using pre-trained Process Reward Models (PRMs) \cite{lightman2023let} to scale test-time reasoning. Instead, we focus on multi-step, interactive agents where the state includes external environment observations, necessitating a critic trained on interaction history rather than a standard reasoning PRM. Furthermore, AMC integrates rigorous RL concepts, viewing the critic not just as a heuristic verifier but as a learned proxy for the (soft) value function in a control-as-inference formulation.

\looseness=-1
\noindent \textbf{Search and Planning for LLM Agents.} The limitations of greedy prompting strategies for LLM agents like ReAct \cite{yao2022react} have spurred interest in inference-time search. Tree-based methods such as Language Agent Tree Search \cite{zhou2024language} and ExACT \cite{yuexact} adapt Monte Carlo Tree Search (MCTS) \cite{silver2017mastering} to LLMs but often suffer from high latency due to sequential node expansion. Population-based methods offer a parallelizable alternative. \citet{klein2025fleetagentscoordinatedproblem} introduced Fleet of Agents (FoA), which employs a genetic-style particle filter to coordinate agent problem-solving. However, it relies on static, heuristic value functions (e.g., lexical overlap or prompting). Conversely, AMC integrates rigorous RL by training a parameterized value function to estimate the likelihood of agent success, allowing our method to learn optimality from data rather than relying on hand-crafted heuristics.

\section{Experiments}
\label{sec:experiments}

We first detail our experimental configuration, and follow that by presenting overall results demonstrating AMC's effectiveness across benchmarks compared to baselines. We then provide a multi-perspective analysis, including a head-to-head comparison with policy gradient RL, an ablation study on value function optimization, and a cost-efficiency evaluation. 
Additional analyses, including the impact of the value function's base model, resampling strategies, metrics, and additional datasets, can be found in \autoref{app:experiments}. Qualitative examples are provided in \autoref{app:qualitative_analysis}.

\subsection{Experimental Configuration}
Here we describe our main experimental settings. We relegate additional details to \autoref{app:setup} and \autoref{app:templates}.

\noindent \textbf{Datasets.} Our core evaluations are done across three agentic benchmarks: web-based commerce  \citep[WebShop;][]{yao2023webshopscalablerealworldweb}, scientific reasoning \citep[SciWorld;][]{wang2022scienceworldagentsmarter5th}, and hierarchical crafting \citep[TextCraft;][]{prasad2024adaptasneededdecompositionplanning}. 
To demonstrate the generalizability of our approach, we provide further, albeit limited, evaluation on the Weather and Movie datasets \cite{ma2024agentboardanalyticalevaluationboard}, relegating these to \autoref{tab:weather_movie_result} in \appref{app:additional_datasets}. These environments encompass both dense and sparse reward structures.
We evaluate our agents using AgentGym \cite{agentgym}, a centralized platform comprising a heterogeneous collection of environments that span a diverse range of real-world tasks.

\noindent \textbf{Training.} To train our value functions, we curate a set of trajectories $\traj \sim \policy(\traj)$ on a set of training tasks for each benchmark. We initialize $\networktrain$ using pre-trained open-weight language models and then fine-tune, in tandem, a regression head and a set of low-rank adaptation (LoRA) blocks \cite{hu2021loralowrankadaptationlarge}. Further details on the training configuration are available in \autoref{app:setup}.

\noindent \textbf{Base Models.} Consistent with previous work \cite{klein2025fleetagentscoordinatedproblem, agentgym, agentgym-rl}, we initialize our learned network $\networktrain$ from the following base LLMs (all of them the instruct variants): Llama-3.2-11B, Llama-3.1-8B \cite{grattafiori2024llama}, Qwen-2.5-3B, and Qwen-2.5-7B \cite{qwen2025qwen25technicalreport}.  
For our prior LLM policy $\pi$, we use a broad range of models with varying accessibility levels to demonstrate the generalizability of our approach. Specifically, we evaluated open-weight LLM agents—Llama-3.2-11B and Llama-3.1-8B—and black-box agents—GPT-4.1-mini \cite{openai2025gpt41} and GPT-5.1 \cite{openai2025gpt5}.\footnote{We do not use native reasoning capabilities of OpenAI models.} We once again highlight the compatibility of our method with black-box agents as a key feature for many real-world settings.

\noindent \textbf{Baselines.} We compare AMC with two primary baselines performing test-time sampling of parallel agent trajectories: Best-of-$N$ and SMC with Fleet-of-Agents (FoA) prompting \cite{klein2025fleetagentscoordinatedproblem}. 
Best-of-$N$ acts as a baseline, selecting the highest-reward sequence from $N$ parallel trajectories. 
AMC extends this by using learned value estimates from our optimized value function to guide selection throughout the trajectory, rather than relying solely on terminal outcomes. Our SMC baseline with FoA prompting \cite{klein2025fleetagentscoordinatedproblem}, referred to as SMC (FoA), implements trajectory pruning via SMC, leveraging its 2-shot prompt template to evaluate state values based on the preceding trajectory history. 
Although the original FoA employs tree-based search, we focus on its core prompting logic to provide a direct comparison of state-evaluation quality. Since \citet{klein2025fleetagentscoordinatedproblem} only considered WebShop from AgentGym, we compare with their SMC prompting approach on this benchmark. To ensure a fair comparison, we apply SMC (FoA) at the same empirically optimized resampling step used by AMC (\autoref{fig:resample_webshop}, \autoref{fig:resample_sciworld}, and \autoref{fig:resample_textcraft} of \appref{app:resampling}). For the agent's action-generation logic, we mainly adopt a ReAct-style prompting framework \cite{yao2022react}. 
To examine whether AMC transfers beyond this setting, we also implement a ReflAct-style agent \cite{kim2025reflactworldgroundeddecisionmaking} on SciWorld. We also need to choose the number of trajectories $N$. Following the discussion in \autoref{sec:bayesian_inference} in which we note that SMC approximates the optimal policy only as $N \to \infty$, we are compelled to choose a reasonably large (but tractable) trajectory count; to this end, we use $N=15$ trajectories throughout.

\noindent \textbf{Metrics.} In most applications of our method, we can use only a single final trajectory (e.g., in online shopping or deciding on a live chat response), and we will also have access to a (real or approximated) reward function $r$. The most practical application of our method is then to choose the highest reward trajectory out of the $N$ produced. To evaluate this setting, we report performance scores of the highest-reward trajectory per method using the cumulative environmental rewards provided by each benchmark. 
We conduct three independent trials using different seeds to calculate the mean scores and corresponding standard errors.
We also compare to the average scores of the top 5 trajectories within an $N$-trajectory pool in \appref{app:eval_average}, with AMC uniformly improving average-trajectory performance across tasks. We provide a mathematical proof for this result in \appref{app:amc_average}.

\begin{table}
\centering
\caption{WebShop performance, obtained with a Llama-3.2-11B-based value model for AMC.}
\scalebox{0.8}{
\begin{tblr}{
  column{3} = {c}, 
  cell{2,6,9}{2,3} = {bg=gray!10},   
  cell{3-4,7,10}{2,3} = {bg=orange!10}, 
  cell{5,8,11}{2,3} = {bg=green!10},  
  cell{2}{1} = {r=4}{l}, 
  cell{6}{1} = {r=3}{l}, 
  cell{9}{1} = {r=3}{l}, 
  vlines,
  hline{1-2} = {-}{},             
  hline{6,9,12} = {-}{},   
  row{1-Z} = {rowsep=0pt} 
}
\textbf{Policy Model} & \textbf{Method} & \textbf{Score} \\
Llama-3.2-11B & ReAct & 0.159 ($\pm$ 0.030)  \\
 & Best-of-15 (ReAct)& 0.562 ($\pm$ 0.012)  \\
 & SMC (FoA-ReAct)& 0.580 ($\pm$ 0.016)  \\
 & AMC (ReAct) & \textbf{0.625 ($\pm$ 0.009)} \\
GPT-4.1-mini & ReAct & 0.113 ($\pm$ 0.026)   \\
 & Best-of-15 (ReAct)& 0.403 ($\pm$ 0.016)  \\
 & AMC (ReAct)& \textbf{0.488 ($\pm$ 0.020)}  \\
GPT-5.1 & ReAct & 0.171 ($\pm$ 0.017) \\
 & Best-of-15 (ReAct) & 0.519 ($\pm$ 0.013)  \\
 & AMC (ReAct)&  \textbf{0.543 ($\pm$ 0.009)}
\end{tblr}
}
\label{tab:webshop_result}
\vspace{-8pt}
\end{table}

\subsection{Experimental Results}
\noindent \textbf{WebShop.} We generated $N=15$ trajectories for all methods, with the exception of the ReAct baseline, which consists of a single trajectory ($N=1$). Both AMC and SMC (FOA) used the same Llama-3.2-11B as the base value function for a fair comparison. \autoref{tab:webshop_result} presents the overall performance on WebShop. Scaling the number of sampled trajectories with Best-of-15 consistently yields substantial performance gains over the single-trajectory ReAct baseline. For the Llama-3.2-11B policy, SMC (FoA) provides only marginal improvements over Best-of-15, while AMC clearly outperforms all methods, demonstrating the benefits of learning a value function using AMC compared to the hand-crafted prompt-based value function from SMC (FoA). We also find it noteworthy that our relatively small (11B) value function can successfully guide a large-scale foundational agent like GPT-5.1, improving performance over this base policy without ever updating the underlying model.

\noindent \textbf{SciWorld.} In SciWorld, we use Llama-3.1-8B as the base model for our learned value functions in order to match the setup of \citet{agentgym-rl}, who also report results on this dataset. \autoref{tab:sciworld_result} presents the overall performance results on the SciWorld benchmark. Similar to our WebShop findings, expanding the trajectory pool with Best-of-$N$ significantly outperforms single-trajectory baselines, while AMC delivers additional consistent gains across all evaluated policies. For this benchmark, we considered both ReAct- and ReflAct-style agents; we find that ReflAct demonstrates superior performance. Our AMC method with a ReflAct-style policy achieved further improvements, highlighting the potential for compounding gains when combining AMC with more advanced decision-making agents. The consistent performance gains observed across all model scales underscore the robustness and scalability of the AMC framework.

\begin{figure*}[htbp]
    \centering
    \includegraphics[width=0.8\textwidth]{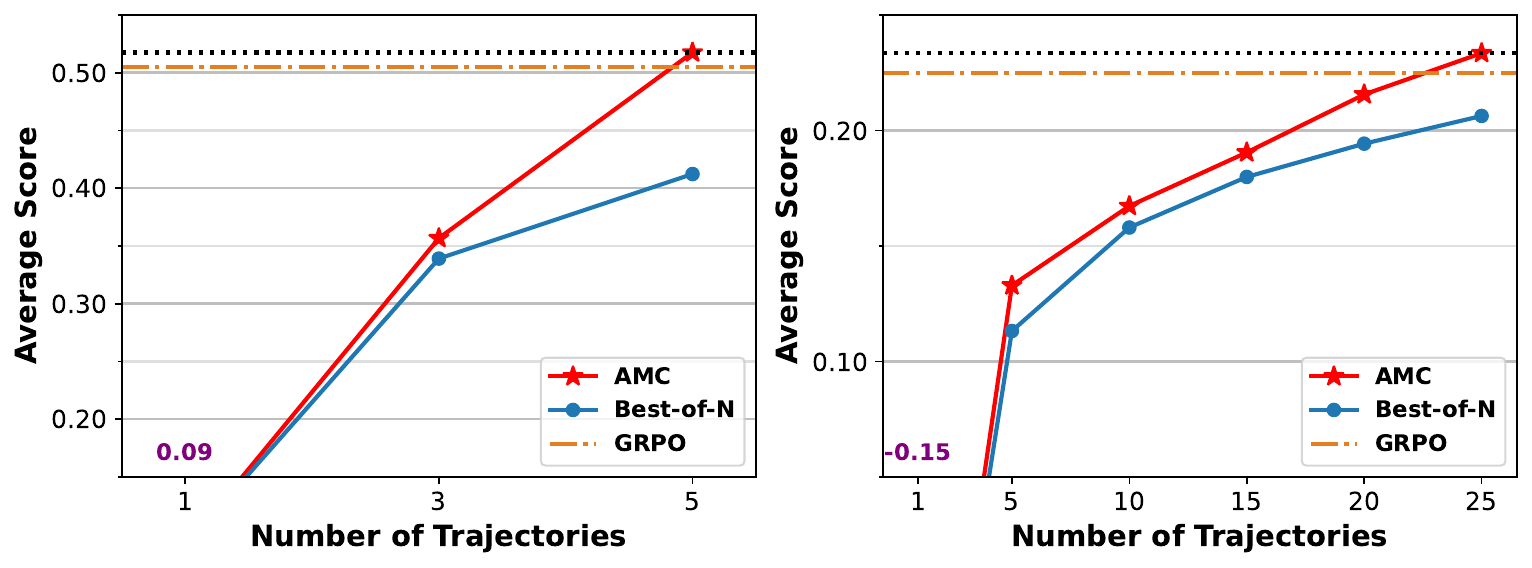}
    \caption{Comparisons to GRPO on SciWorld. Left: AMC and Best-of-$N$ with a GPT-5.1 policy and a Qwen-2.5-7B-based value function, compared to GRPO with a Qwen-2.5-7B backbone (highlighting the advantage of using AMC with black-box models). Right: AMC and Best-of-$N$ with a Qwen-2.5-3B policy and value function, compared to GRPO with the same backbone.}
    \label{fig:rl_comparison}
\end{figure*}

\begin{table}
\centering
\caption{SciWorld performance, obtained with a Llama-3.1-8B-based value model for AMC.}
\scalebox{0.8}{
\begin{tblr}{
  column{3} = {c}, 
  cell{2,5,8,11}{2,3} = {bg=gray!10},   
  cell{3,6,9,12}{2,3} = {bg=orange!10}, 
  cell{4,7,10,13}{2,3} = {bg=green!10}, 
  cell{2}{1} = {r=6}{l}, 
  cell{8}{1} = {r=3}{l}, 
  cell{11}{1} = {r=3}{l}, 
  vlines,
  hline{1-2,5,8,11,14} = {-}{},
  row{1-Z} = {rowsep=0pt}
}
\textbf{Policy Model} & \textbf{Method} & \textbf{Score} \\
Llama-3.1-8B & ReAct & 0.013 ($\pm$ 0.014) \\
 & Best-of-15 (ReAct) & 0.311 ($\pm$ 0.014) \\
 & AMC (ReAct) & \textbf{0.347 ($\pm$ 0.015)}  \\
 & ReflAct & 0.051 ($\pm$ 0.015) \\ 
 & Best-of-15 (ReflAct) & 0.347 ($\pm$ 0.010) \\
 & AMC (ReflAct) & \textbf{0.376 ($\pm$ 0.016)}  \\
GPT-4.1-mini & ReAct & 0.250 ($\pm$ 0.023)  \\
 & Best-of-15 (ReAct) & 0.616 ($\pm$ 0.020)  \\
 & AMC (ReAct)& \textbf{0.673 ($\pm$ 0.009)} \\
GPT-5.1 & ReAct & 0.090 ($\pm$ 0.049)  \\
 & Best-of-15 (ReAct) & 0.533 ($\pm$ 0.026)  \\
 & AMC (ReAct)& \textbf{0.597 ($\pm$ 0.023)}  
\end{tblr}
}
\label{tab:sciworld_result}
\vspace{-8pt}
\end{table}

\noindent \textbf{TextCraft.} Lastly, in TextCraft, we generated $N=15$ trajectories and trained a Llama-3.2-11B value function for AMC. \autoref{tab:textcraft_result} again demonstrates that multiple-trajectory approaches achieve a higher score compared to single-trajectory baselines. AMC also obtained substantial gains over Best-of-15 for smaller policy models. However, we observe performance saturation as the base policy improves. This is evident with GPT-5.1, where Best-of-15 even exceeds AMC. We hypothesize that this performance inversion is due to the strength of GPT-5.1 as a prior. For TextCraft, GPT-5.1 generates shorter, higher-confidence trajectories, leaving little room for improvement and reducing the diversity of the value function's training data. This makes it more difficult for AMC to separate out promising trajectories, leading it to accidentally prune some. Based on this finding, we expect AMC to be most useful for model-task pairs where the model produces good but not uniformly perfect trajectories. Crucially, these findings do not imply that AMC loses utility as models get better because $(i)$ there is no foreseeable upper bound on real-world problem complexity and $(ii)$ cost constraints will favour smaller, more efficient models that may not be performance-maximal. 

\begin{table}
\centering
\caption{TextCraft performance, obtained with a Llama-3.2-11B-based value model for AMC.}
\scalebox{0.8}{
\begin{tblr}{
  column{3} = {c}, 
  cell{2,5,8}{2,3} = {bg=gray!10},   
  cell{3,6,9}{2,3} = {bg=orange!10}, 
  cell{4,7,10}{2,3} = {bg=green!10},  
  cell{2}{1} = {r=3}{l}, 
  cell{5}{1} = {r=3}{l}, 
  cell{8}{1} = {r=3}{l}, 
  vlines,
  hline{1-2} = {-}{}, 
  hline{5,8,11} = {-}{},
  row{1-Z} = {rowsep=0pt}
}
\textbf{Policy Model} & \textbf{Method} & \textbf{Score} \\
Llama-3.2-11B & ReAct & 0.102 ($\pm$ 0.029) \\
 & Best-of-15 (ReAct) & 0.296 ($\pm$ 0.019) \\
 & AMC (ReAct) & \textbf{0.543 ($\pm$ 0.057)} \\
GPT-4.1-mini & ReAct & 0.432 ($\pm$ 0.055) \\
 & Best-of-15 (ReAct) & 0.728 ($\pm$ 0.010) \\
 & AMC (ReAct) & \textbf{0.852 ($\pm$ 0.020)}\\
GPT-5.1 & ReAct & 0.691 ($\pm$ 0.012) \\
 & Best-of-15 (ReAct) & \textbf{0.889 ($\pm$ 0.000)} \\
 & AMC (ReAct) & 0.790 ($\pm$ 0.021)  \\
\end{tblr}
}
\label{tab:textcraft_result}
\vspace{-8pt}
\end{table}

\subsection{Comparison with GRPO}

GRPO \cite{shao2024deepseekmathpushinglimitsmathematical} is a KL-regularized RL method that has become common for LLM-based policies. As with AMC, GRPO attempts to model $\policyopt$ from \autoref{eq:variational_inference}, but unlike AMC, it requires full access to the policy's underlying parameters to perform weight updates. 
GRPO also requires online trajectory rollouts during training, which incurs additional computational overhead, over the simple offline regression task required to learn $\valuetrain$ (\autoref{sec:learning_value}).
To clarify, AMC is not intended to replace GRPO, but to serve as a viable alternative when GRPO is not possible. We thus consider GRPO to be an oracle rather than a typical baseline.

In this section, we put AMC and GRPO head-to-head to better understand whether our approach is a viable alternative to policy gradient RL. For a direct comparison to \citet{agentgym-rl}, we use the same LLM policy backbone (Qwen-2.5-3B) for both AMC and GRPO; we also use this same base model to train our value function, thus controlling for base model and pre-training biases. Additionally, to highlight AMC's compatibility with black-box policies, we consider an alternative comparison setup: using a frontier LLM agent (GPT-5.1) as the prior policy for AMC and training a value function backed by the same LLM as the one GRPO fine-tunes for its policy (Qwen-2.5-7B).

\autoref{fig:rl_comparison} illustrates these comparisons on the SciWorld benchmark. Note that all GRPO scores are taken from \citet{agentgym-rl}. When backed by a GPT-5.1 policy (\autoref{fig:rl_comparison}, left), AMC outperforms GRPO with only $N=5$ trajectories, achieving a performance level that remains unattainable for Best-of-$N$ given the same trajectory budget. Furthermore, when using the same LLM backbone for the policy of both GRPO and AMC (\autoref{fig:rl_comparison}, right), we find that when scaling AMC to 25 trajectories it outperforms fully fine-tuning this policy with GRPO. These results highlight that $(i)$ AMC performs comparably to GRPO using the same prior policy given enough trajectories (but requiring fewer than Best-of-$N$), and $(ii)$ it can benefit even more from better black-box priors. In \appref{app:grpo_cost}, we fine-tuned our own GRPO baselines to provide a comprehensive performance and cost analysis. To underscore the relative cost requirements of AMC and GRPO, we note that all AMC experiments were performed on a workstation with two RTX 6000 Ada desktop GPUs, whereas GRPO experiments required a node with eight A100 GPUs.

\subsection{Impact of Value Model Optimization}

\noindent \textbf{Training vs. Prompting.} Since our trained value functions are initialized from pre-trained (instruction-tuned) LLMs, we ask the question: is training necessary? Can we achieve the same performance by prompting alone? We thus compare AMC to a (heuristic) prompt-based approach, SMC (Zero-shot), where we simply ask the pre-trained LLM to estimate the value of a given state (see \autoref{app:templates} for all prompts). \autoref{tab:critic_training} illustrates this comparison across both open-weight and black-box policies, revealing consistent performance gains from training. Notably, SMC (Zero-Shot) delivers inconsistent performance gain over the Best-of-$N$ baseline, underscoring that raw pre-trained knowledge is insufficient for precise state-value estimation in these agentic environments.

\begin{table}[h] 
\centering
\caption{Impact of value function optimization on performance.}
\scalebox{0.775}{
\begin{tblr}{
  colspec = {l l l c},
  vlines,
  hline{1,2,5,8,11,14,17,20,23,26,29} = {-}{}, 
  cell{2,11,20}{1} = {r=9}{l, m}, 
  cell{2,5,8,11,14,17,20,23,26}{2} = {r=3}{l, m}, 
  cell{2-3,5-6,8-9,11-12,14-15,17-18,20-21,23-24,26-27}{3-5} = {bg=orange!10}, 
  cell{4,7,10,13,16,19,22,25,28}{3-4} = {bg=green!10},  
  row{1} = {font=\bfseries},
  row{1-Z} = {rowsep=0pt}  
}
Dataset & Policy Model & Method & Score \\
Webshop & Llama-3.2-11B & Best-of-15 & 0.562 ($\pm$ 0.012) \\
        &               & SMC (Zero-shot) & 0.556 ($\pm$ 0.014) \\
        &               & AMC & \textbf{0.625 ($\pm$ 0.009)} \\
        & GPT-4.1-mini  & Best-of-15 & 0.403 ($\pm$ 0.016) \\
        &               & SMC (Zero-shot) & 0.418 ($\pm$ 0.014)\\
        &               & AMC & \textbf{0.488 ($\pm$ 0.020)} \\
        & GPT-5.1  & Best-of-15 &  0.519 ($\pm$ 0.013)\\
        &               & SMC (Zero-shot) & 0.522 ($\pm$ 0.008) \\
        &               & AMC & \textbf{0.543 ($\pm$ 0.009)} \\        
SciWorld & Llama-3.1-8B & Best-of-15 & 0.311 ($\pm$ 0.014) \\
         &               & SMC (Zero-shot) & 0.305 ($\pm$ 0.013) \\
         &               & AMC & \textbf{0.347 ($\pm$ 0.015)} \\
         & GPT-4.1-mini  & Best-of-15 & 0.616 ($\pm$ 0.020) \\
         &               & SMC (Zero-shot) & 0.640 ($\pm$ 0.022) \\
         &               & AMC & \textbf{0.673 ($\pm$ 0.009)} \\
         & GPT-5.1  & Best-of-15 & 0.533 ($\pm$ 0.026)\\
         &               & SMC (Zero-shot) & 0.506 ($\pm$ 0.041) \\
         &               & AMC & \textbf{0.597 ($\pm$ 0.023)} \\         
TextCraft & Llama-3.2-11B & Best-of-15 & 0.296 ($\pm$ 0.019) \\
          &               & SMC (Zero-shot) & 0.296 ($\pm$ 0.043) \\
          &               & AMC & \textbf{0.543 ($\pm$ 0.057)} \\
          & GPT-4.1-mini  & Best-of-15 & 0.728 ($\pm$ 0.010) \\
          &               & SMC (Zero-shot) & 0.765 ($\pm$ 0.025) \\
          &               & AMC & \textbf{0.852 ($\pm$ 0.020)} \\
          & GPT-5.1  & Best-of-15 & \textbf{0.889 ($\pm$ 0.000)}\\
          &               & SMC (Zero-shot) & 0.815 ($\pm$ 0.026) \\
          &               & AMC & 0.790 ($\pm$ 0.021) \\          
\end{tblr}}
\label{tab:critic_training}
\vspace{-8pt}
\end{table}

\begin{figure*}[t]
    \centering
    \includegraphics[width=0.33\linewidth]{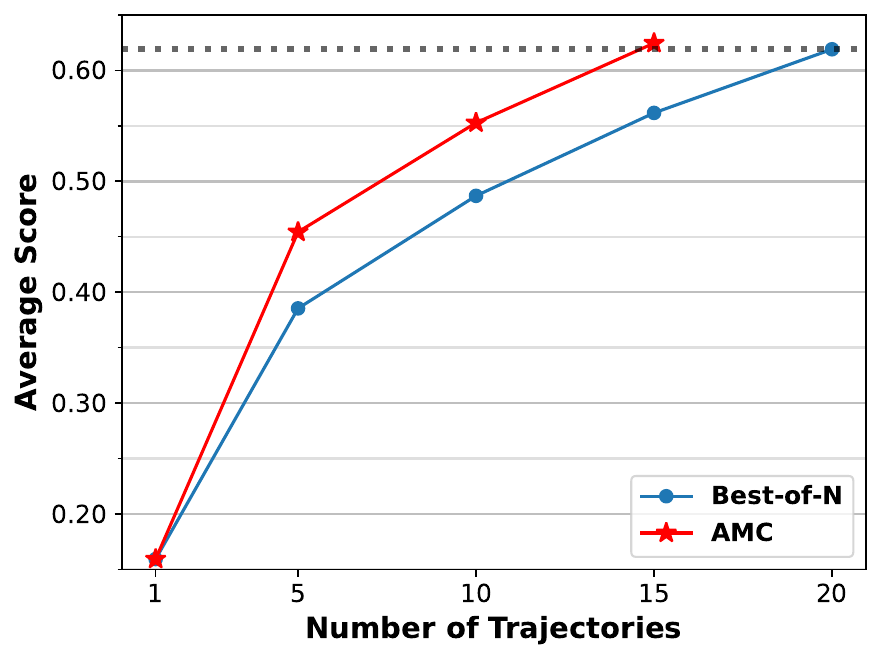}
    \includegraphics[width=0.33\linewidth]{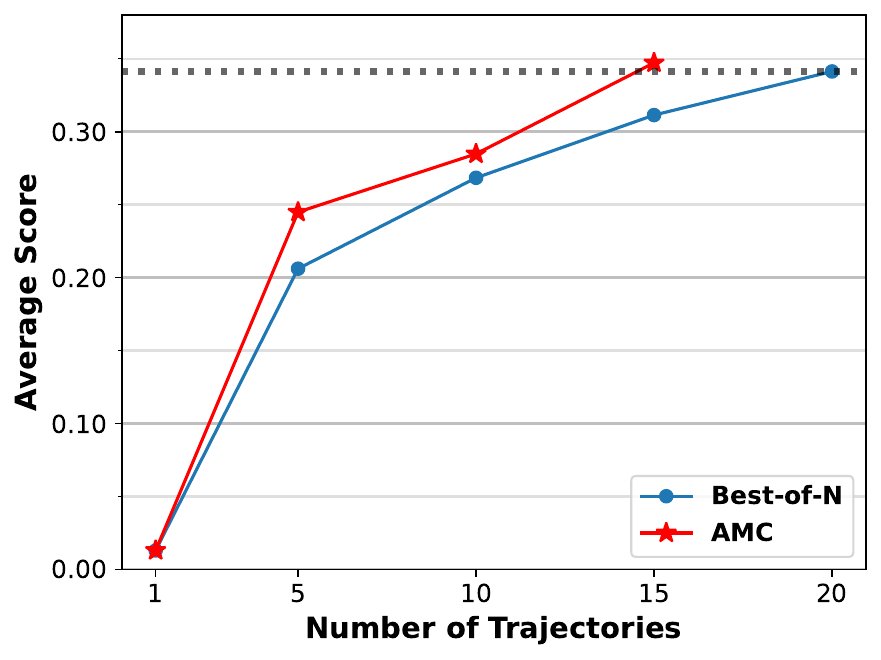}
    \includegraphics[width=0.33\linewidth]{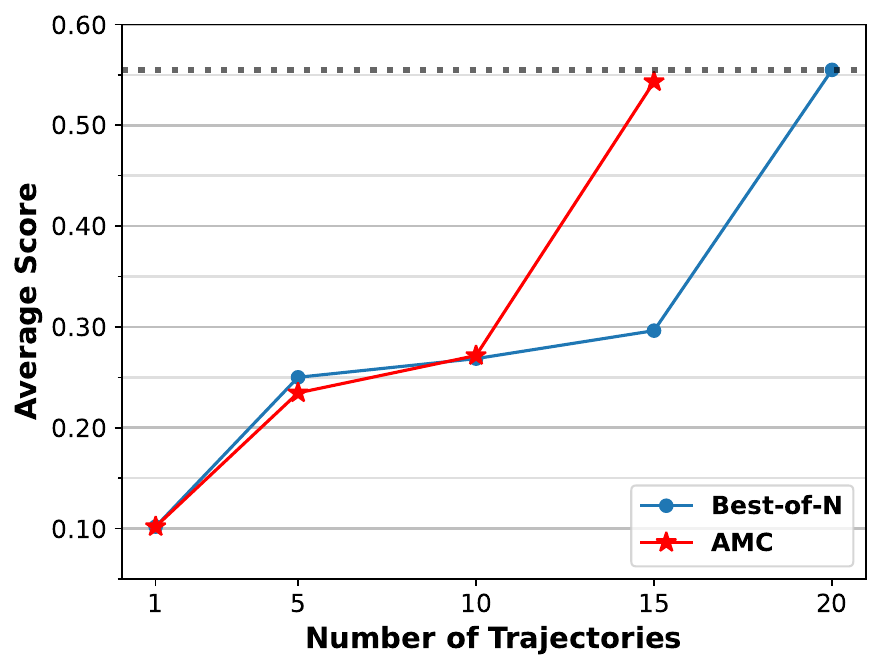}
    \caption{Best-of-$N$ vs. AMC across tasks. Left: WebShop with Llama-3.2-11B policy, value function. Middle: SciWorld with Llama-3.1-8B policy, value function. Right: TextCraft with Llama-3.2-11B policy, value function.}
    \label{fig:num_particles}
\end{figure*}

\subsection{Cost-Efficiency and Scaling}
\noindent \textbf{Trajectory Counts.} The effectiveness of our approach relies on sampling multiple trajectories at test-time, which naturally raises questions around inference cost and scalability. To evaluate the practical feasibility of AMC, we compare its efficiency against Best-of-$N$ in \autoref{fig:num_particles}. 
When the number of trajectories is limited to 5 or 10, AMC demonstrates performance gains over the baseline with the exception of TextCraft. This exception is likely due to the reduced diversity of visited states arising from TextCraft's lower relative task complexity, and is further hampered by AMC's resampling; increasing the trajectory count resolves this by promoting more state space exploration. As $N$ increases, AMC generally achieves performance parity with Best-of-20 using only $N=15$ trajectories, thereby reducing test-time-compute by approximately 25\%. Note that the value function is only evaluated on the initial state and on resampling steps and requires only a single prefill, so its computational overhead is small compared to the cost of the LLM policy, which performs both prefill and decoding at every step.

\begin{table}[h] 
\centering
\caption{Performance and cost of a lightweight policy using AMC versus the more expensive GPT-5.1.}
\scalebox{0.73}{
\begin{tblr}{
  cell{2,4,6}{3-5} = {bg=orange!10}, 
  cell{3,5,7}{3-5} = {bg=green!10},  
  colspec = {l l l c c},
  vlines,
  hline{1,2,4,6,8} = {-}{},
  cell{2}{1} = {r=2}{l}, 
  cell{4}{1} = {r=2}{l},
  cell{6}{1} = {r=2}{l}, 
    row{1-Z} = {rowsep=0pt}
}
\textbf{Dataset} & \textbf{Policy Model} & \textbf{Method} & \textbf{Score} & \textbf{Cost (USD)} \\
WebShop   & GPT-5.1       & Best-of-15 & \textbf{0.519 ($\pm$ 0.013)}  & 0.39 \\
          & GPT-4.1-mini & AMC      & 0.488 ($\pm$ 0.020)  & \textbf{0.14} \\
SciWorld  & GPT-5.1       & Best-of-15 & 0.533 ($\pm$ 0.026) & 0.18 \\
          & GPT-4.1-mini & AMC      &   \textbf{0.673 ($\pm$ 0.009)}     & \textbf{0.06}\\
TextCraft & GPT-5.1       & Best-of-15 & \textbf{0.889 ($\pm$ 0.000)}   & 0.45 \\
          & GPT-4.1-mini & AMC      & 0.852 ($\pm$ 0.020) & \textbf{0.21}\\
\end{tblr}
}
\label{tab:gpt4.1_vs_gpt5.1}
\vspace{-8pt}
\end{table}

\noindent \textbf{Policy Cost.} Expanding on these results, we find that our method also enables smaller black-box models to achieve performance comparable to their larger counterparts, effectively narrowing the gap between lightweight and high-capacity policies. We empirically substantiate this finding in \autoref{tab:gpt4.1_vs_gpt5.1}, where costs are calculated based on the average input and output token counts across tasks.\footnote{\href{https://platform.openai.com/docs/pricing?latest-pricing=standard}{OpenAI pricing page.}} GPT-4.1-mini paired with AMC maintains a performance level similar to GPT-5.1 while reducing total cost by at least 50\%. 

\noindent \textbf{Prior Surrogates.} Part of AMC's upfront cost is the initial collection of trajectories from the prior LLM policy $\pi$ to train the value model. When $\pi$ is behind an API, this upfront data acquisition can become expensive. To circumvent this, we explore using small, open-weight surrogate policies to sample training trajectories instead of using $\pi$ directly; lightweight models can generally be self-hosted on existing GPUs to circumvent API costs. \autoref{tab:transferability} shows results; using a surrogate policy uniformly outperforms Best-of-$N$. Although sampling trajectories from $\pi$ itself performs better than using a surrogate policy, these results still demonstrate transferability of $\valuetrain$ as a viable way to reduce cost.

\begin{table}[t]
\centering
\caption{Transferability of value functions where base models are Llama-3.2-11B (WebShop, TextCraft) and Llama-3.1-8B (SciWorld).}
\scalebox{0.715}{
\begin{tblr}{
  colspec = {l l l l c},
  vlines,
  cell{2,5,8}{3,4,5} = {bg=orange!10}, 
  cell{3,4,6,7,9,10}{3,4,5} = {bg=green!10},  
  hline{1,2,5,8,11} = {-}{solid}, 
  cell{2,5,8}{1} = {r=3}{l},
  cell{2,5,8}{2} = {r=3}{l},
  cell{3,6,9}{3} = {r=2}{l}, 
  row{1-Z} = {rowsep=0pt}
}
\textbf{Dataset} & \textbf{Policy Model} & \textbf{Method} & \textbf{Surrogate} & \textbf{Score} \\
WebShop       & GPT-4.1-mini  & Best-of-15 & - & 0.403 ($\pm$ 0.016) \\
              &               &  AMC & Llama-3.2-11B & \textbf{0.467 ($\pm$ 0.028)} \\
              &               &       & None  & \textbf{0.488 ($\pm$ 0.020)} \\
SciWorld      & GPT-4.1-mini  & Best-of-15 & - & 0.616 ($\pm$ 0.020)\\
              &               & AMC & Llama-3.1-8B  & \textbf{0.645 ($\pm$ 0.023)} \\
              &               &      & None  & \textbf{0.673 ($\pm$ 0.009)} \\
TextCraft    & GPT-4.1-mini  & Best-of-15 & - & 0.728 ($\pm$ 0.010) \\
              &               & AMC & Llama-3.2-11B & \textbf{0.790 ($\pm$ 0.025})  \\
              &               &      & None   & \textbf{0.852 ($\pm$ 0.020)} \\
\end{tblr}
}
\label{tab:transferability}
\vspace{-8pt}
\end{table}


\section{Conclusion, Limitations, and Future Work}
\label{sec:conclusion}

Most of today's LLM-based interactions are hidden behind black-box APIs, and yet most agentic applications (e.g., e-commerce, software engineering) are specialized tasks that could benefit from task-specific tuning. Is it possible to emulate the effects of RL-based fine-tuning with black-box policies? Our work answers this question in the affirmative. Following the well-known bridge between KL-regularized RL and Bayesian inference, we approximate the agent's optimal policy without directly training it through RL, but rather by Monte Carlo sampling from its Bayesian posterior distribution. Our algorithm, Agentic Monte Carlo (AMC), implements sequential importance resampling using a learned value function to guide samples from the black-box LLM prior toward samples from the intractable optimal posterior. 

We empirically showed the effectiveness of our approach on the WebShop, SciWorld, and TextCraft benchmarks, highlighting improvements over existing baselines.
Notably, AMC is able to improve the performance of smaller black-box models to match that of larger black-box models, thus incurring significantly cheaper API costs.
We further showcased our method against policy gradient RL methods such as GRPO, demonstrating that AMC can achieve equivalent performance as we scale test-time compute.

There are a number of avenues for future work on AMC. For one, our learned value function will never perfectly approximate the true value function. Potential ways to improve the training procedure include temporal difference learning, \cite{sutton1988learning}, reward shaping \cite{ng1999policy}, or better approximating \autoref{eq:loss_func_ideal}. Another area for improvement is the computational overhead that comes with having to sample multiple trajectories in parallel during inference. Greater efficiency may be attained, for example, by distilling AMC or some component of AMC into a smaller network \cite{hinton2015distilling}. AMC can also be extended to broader settings, like multi-agent black-box agents. As more frontier models are released with proprietary, opaque architectures, AMC provides a principled mechanism to orient and optimize these black-box agents for task-specific execution. 
Additionally, several promising directions exist for enhancing resampling strategies: developing auxiliary models trained to trigger resampling, leveraging LLM-based judges to identify specific trajectory states where resampling is likely to yield the highest marginal gain \cite{feng2026conformal}, and employing adaptive resampling frameworks or hyperparameter optimization techniques to arrive at more sophisticated choices of resampling intervals \citep{Doucet2001,chopin2020introduction}. Lastly, there remains theory to be built around our method. AMC introduces potential error in the reward model \citep{huang2025best} or value function, and it would be useful to understand its interplay with the already-extensive body of work around SMC \citep{crisan2000convergence,moral2004feynman,marion2023finite}.

\section*{Impact Statement}

This paper presents work whose goal is to advance the field of Machine
Learning, specifically the use of LLM agents. Our work focuses on providing statistically principled sampling methods to extend reinforcement learning techniques to black-box LLMs. This work is unlikely to cause broader societal impact outside of the general impacts of agentic AI.


\bibliography{bib}
\bibliographystyle{icml2026}

\newpage
\appendix
\onecolumn

\section{Theory}
\label{app:theory}
\subsection{Sequential Importance Resampling}\label{app:sir_alg}
In this section, we record the full SIR algorithm, \autoref{alg:sir} \citep{Doucet2001}, with our weight update rule including the (soft) value function.

\begin{algorithm}[h]
   \caption{Sequential Importance Resampling}
   \label{alg:sir}
\begin{algorithmic}[1]
   \REQUIRE Reward function $\reward$, value function $\valuefunc$, prior policy $\policy$, resampling criterion $\resamplingcrit$, trajectory count $N$, time horizon $T$
   \ENSURE Posterior trajectories $\traj^{(i)}$ and weights $w_T^{(i)}$ for $i=1,\dots,N$
   
   \STATE \textbf{Initialization:} For $i = 1, \dots, N$, sample $\state[0]^{(i)} \sim \policy(\state[0])$ and set $w_0^{(i)} = 1/N$.
   
   \FOR{$t = 1$ \textbf{to} $T$}
      \FOR{$i = 1$ \textbf{to} $N$}
         \STATE Sample $\tstate^{(i)} \sim \policy(\state \mid \state[t-1]^{(i)})$
         \STATE Set $\ttraj[0:t]^{(i)} \leftarrow \left(\trajcurr[t-1]^{(i)}, \tstate^{(i)}\right)$
         \STATE Compute importance weights:
         \vspace{-2pt}
         \begin{equation*}
             w_t^{(i)} = w_{t-1}^{(i)} e^{\valuefunc(\tstate^{(i)}) - \valuefunc(\state[t-1]^{(i)}) + \reward(\state[t-1]^{(i)})}
                  \vspace{-2pt}
                  \end{equation*}
      \ENDFOR
            
      \IF{$\resamplingcrit(t, \{w_t^{(i)}\}_{i=1}^N)$}
          \STATE Resample $N$ choices of trajectory $\{\trajcurr^{(i)}\}_{i=1}^N$ from $\{\ttraj[0:t]^{(i)}\}_{i=1}^N$ according to the normalized weights: \texttt{Multinomial}$\left(N, \left\{\frac{w_t^{(i)}}{\sum_{j=1}^N w_t^{(j)}}\right\}_{i=1}^N\right)$.
          \STATE Reset weights $w_t^{(i)} \leftarrow 1/N$.
      \ELSE
          \STATE $\trajcurr^{(i)} \leftarrow \ttraj[0:t]^{(i)}$ for $i=1,\dots,N$.
      \ENDIF
   \ENDFOR
\end{algorithmic}
\end{algorithm}

\subsection{Equivalence between KL-Regularized RL and Bayesian Inference}\label{app:theory_equivalence}

For completeness, we include the proof of the equivalence between KL-regularized RL and Bayesian inference discussed by \citet{korbak2022rl}. This is also a known result in the control-as-inference literature \citep{levine2018reinforcement}, and mathematically the same as the fundamental proof justifying variational inference (VI) as a method.
\begin{proposition}
If we let
\begin{equation}
    \policyopt^\mathrm{(Bayes)}(\traj) = \frac{1}{Z} \policy(\traj) e^{\reward(\traj)/\beta},
\end{equation}
where $Z = \int \policy(\traj) e^{\reward(\traj)/\beta} d\traj$ is a normalizing constant, and let 
\begin{equation}
    \policyopt^\mathrm{(VI)} = \argmax_{\policytrain} \mathcal{J}(\policytrain) = \argmax_{\policytrain} \left( \E_{\policytrain(\traj)} \left[\reward(\traj)\right] - \beta \KL\left[\policytrain \Vert \policy\right] \right),
\end{equation}
then $\policyopt^\mathrm{(Bayes)} = \policyopt^\mathrm{(VI)}$.
\end{proposition}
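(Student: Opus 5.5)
The plan is to rewrite the objective $\mathcal{J}$ as a negative KL divergence to the Bayesian posterior plus a constant, and then invoke the nonnegativity of KL (Gibbs' inequality). First, I expand the KL term as $\KL[\policytrain \Vert \policy] = \E_{\policytrain(\traj)}[\log \policytrain(\traj) - \log \policy(\traj)]$. Substituting into $\mathcal{J}$ and dividing by $\beta>0$, which does not change the argmax, gives
\begin{equation*}
\frac{1}{\beta}\mathcal{J}(\policytrain) = \E_{\policytrain(\traj)}\left[\log\left(\policy(\traj) e^{\reward(\traj)/\beta}\right) - \log \policytrain(\traj)\right].
\end{equation*}

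Next, I use the definition of $\policyopt^\mathrm{(Bayes)}$ to write $\policy(\traj)e^{\reward(\traj)/\beta} = Z\,\policyopt^\mathrm{(Bayes)}(\traj)$. Substituting this yields
\begin{equation*}
\frac{1}{\beta}\mathcal{J}(\policytrain) = \log Z - \KL\left[\policytrain \Vert \policyopt^\mathrm{(Bayes)}\right].
\end{equation*}
Here $\log Z$ does not depend on $\policytrain$. Since $\KL \ge 0$, with equality iff $\policytrain = \policyopt^\mathrm{(Bayes)}$ (almost everywhere, or pointwise given the discrete spaces assumed in Section~\ref{sec:background_rl}), the maximizer is unique and equals $\policyopt^\mathrm{(Bayes)}$. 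As a by-product, the maximal value is $\beta\log Z$.

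The main obstacle is not algebraic but one of well-posedness, and I would handle it with brief assumptions. First, $Z$ must be finite and positive. This holds for finite horizon $T$ and discrete states if rewards are bounded, or more generally whenever the defining sum converges. Second, the feasible set of $\policytrain$ must be rich enough to contain $\policyopt^\mathrm{(Bayes)}$; I read the argmax as ranging over all trajectory distributions, consistent with the footnote folding dynamics into $\policy$. Third, I adopt the convention that $\KL[\policytrain\Vert\policy]=\infty$ when $\policytrain$ is not absolutely continuous with respect to $\policy$, so such $\policytrain$ are never optimal, and this also justifies the logarithm manipulations on the support of $\policytrain$. With these stated, the identity above settles the claim. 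The identity is the same computation as the ELBO decomposition, which matches the paper's remark about variational inference.
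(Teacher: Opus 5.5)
Your proposal is correct and follows the paper's proof essentially line for line: you rewrite $\mathcal{J}(\policytrain) = \beta\log Z - \beta\,\KL[\policytrain \Vert \policyopt^{(\mathrm{Bayes})}]$ and conclude from the nonnegativity of KL, with equality only when the two distributions coincide. Dividing by $\beta$ first is only cosmetic. Your explicit well-posedness conditions are left implicit in the paper and make the argument slightly more careful without changing the route: $0<Z<\infty$, an argmax over all trajectory distributions, and the convention $\KL=\infty$ outside absolute continuity.
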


\begin{proof}
Expanding $\mathcal{J}(\policytrain)$, we can rewrite it in terms of $\policy_*^\mathrm{(Bayes)}$:

\begin{align*}
\mathcal{J}(\policytrain) &= \E_{\policytrain(\traj)} \left[\reward(\traj)\right] - \beta \KL\left[\policytrain \Vert \policy\right] \\
    &= \int \policytrain(\traj) \reward(\traj) d\traj - \beta \int \policytrain(\traj) \log \frac{\policytrain(\traj)}{\policy(\traj)} d\traj \\
    &= \beta \int \policytrain(\traj) \left[ \frac{\reward(\traj)}{\beta} - \log \frac{\policytrain(\traj)}{\policy(\traj)} \right] d\traj \\
    &= \beta \int \policytrain(\traj) \log \frac{\policy(\traj) e^{\reward(\traj)/\beta}}{\policytrain(\traj)} d\traj \\
    &= \beta \int \policytrain(\traj) \log \frac{Z \policy_*^\mathrm{(Bayes)}(\traj)}{\policytrain(\traj)} d\traj \\
    &= \beta \log Z - \beta \int \policytrain(\traj) \log \frac{\policytrain(\traj)}{\policy_*^\mathrm{(Bayes)}(\traj)} d\traj \\
    &= \beta \log Z - \beta \KL\left[\policytrain \Vert \policy_*^\mathrm{(Bayes)}\right]
\end{align*}

Because $\beta \log Z$ is a constant independent of $\policytrain$, the argmax of $\mathcal{J}(\policytrain)$ is equivalent to the argmin of $\KL\left[\policytrain \Vert \policy_*^\mathrm{(Bayes)}\right]$. Since the KL divergence is minimized uniquely at zero when its arguments are equal, it follows that $\policy_*^\mathrm{(VI)} = \policy_*^\mathrm{(Bayes)}$.
\end{proof}

\subsection{Recursive Decomposition of Importance Weights}\label{app:theory_decomp}

Here we show how the importance weights can be recursively decomposed. This proof is similar, for example, to that of \citet[Appendix A.4]{piche2018probabilistic}.

\begin{proposition}
The importance weight $w_t = \frac{\pi_*(\trajcurr)}{\policy(\trajcurr)}$ (\autoref{eq:importance_weight_def}) obeys the recursive decomposition from \autoref{eq:importance_weight_recursive}:
\begin{equation*}
    w_t = w_{t-1} \cdot e^{\valuefunc(\state) - \valuefunc(\state[t-1]) + \frac{\reward(\state[t-1])}{\beta}},
\end{equation*}
where $\valuefunc(\state) = \log\E_{\policy(\trajfuture[t+1] \mid \state)}[e^{\frac{1}{\beta}\sum_{\tau=t}^T \reward(\state[\tau])}]$ is the soft value function.
\end{proposition}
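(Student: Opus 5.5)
The plan is to compute the marginal of the optimal posterior on a partial trajectory, $\policyopt(\trajcurr)$, in closed form. We then take the ratio of consecutive importance weights and identify the terms with the soft value function. The key structural fact is that $\policy$ is Markov in the folded sense of the paper: it factorizes as $\policy(\traj) = \policy(\state[0])\prod_{t=1}^{T}\policy(\state \mid \state[t-1])$. Consequently $\policy(\trajfuture[t+1] \mid \trajcurr) = \policy(\trajfuture[t+1] \mid \state)$, and the reward $\reward(\traj)=\sum_\tau \reward(\state[\tau])$ splits additively into past and future parts.

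First, we marginalize \autoref{eq:bayesian_inference} over the future $\trajfuture[t+1]$. Writing $\reward(\traj) = \sum_{\tau=0}^{t-1}\reward(\state[\tau]) + \sum_{\tau=t}^{T}\reward(\state[\tau])$, we get
\begin{equation*}
\policyopt(\trajcurr) = \frac{1}{Z}\,\policy(\trajcurr)\, e^{\frac{1}{\beta}\sum_{\tau=0}^{t-1}\reward(\state[\tau])}\, \E_{\policy(\trajfuture[t+1]\mid \state)}\Big[e^{\frac{1}{\beta}\sum_{\tau=t}^T \reward(\state[\tau])}\Big],
\end{equation*}
where the Markov property lets the conditional on the whole prefix collapse to conditioning on $\state$. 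By the definition of $\valuefunc$, this gives
\begin{equation*}
w_t = \frac{\policyopt(\trajcurr)}{\policy(\trajcurr)} = \frac{1}{Z}\exp\Big(\frac{1}{\beta}\sum_{\tau=0}^{t-1}\reward(\state[\tau]) + \valuefunc(\state)\Big).
\end{equation*}

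Second, we apply the same formula at $t-1$:
\begin{equation*}
w_{t-1} = Z^{-1}\exp\Big(\frac{1}{\beta}\sum_{\tau=0}^{t-2}\reward(\state[\tau]) + \valuefunc(\state[t-1])\Big).
\end{equation*}
Dividing, $Z$ cancels and the past reward sums differ only by $\reward(\state[t-1])/\beta$. This yields $w_t/w_{t-1} = e^{\valuefunc(\state) - \valuefunc(\state[t-1]) + \reward(\state[t-1])/\beta}$, which is the claim.

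To also recover the first line of \autoref{eq:importance_weight_recursive}, we note that $e^{\valuefunc(\state[t-1])}e^{-\reward(\state[t-1])/\beta}$ equals $\E_{\policy(\trajfuture \mid \state[t-1])}[e^{\frac1\beta\sum_{\tau=t}^T\reward(\state[\tau])}]$. This holds by pulling the deterministic factor $e^{\reward(\state[t-1])/\beta}$ out of the expectation defining $\valuefunc(\state[t-1])$.

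The main obstacle is bookkeeping rather than depth. We must state the Markov/conditional-independence assumption explicitly, since without it the inner expectation depends on the whole prefix and $\valuefunc$ would need to be a function of $\trajcurr$. We must also handle the edge cases: at $t=T$ the future is empty, so $\valuefunc(\state[T]) = \reward(\state[T])/\beta$, and at $t=0$ the weight reduces to $e^{\valuefunc(\state[0])}/Z$, whereas the algorithm initializes it to $1/N$. Finally, the weights are only defined up to the constant $Z$, which is harmless because SIR normalizes them.
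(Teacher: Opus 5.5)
Your proposal is correct and follows essentially the same route as the paper. You marginalize $\pi_*$ over the future to obtain $w_t = Z^{-1}\exp\bigl(\tfrac{1}{\beta}\sum_{\tau=0}^{t-1} r(s_\tau) + V(s_t)\bigr)$, then divide by the same expression at $t-1$; stating the Markov factorization explicitly (the paper uses it implicitly when it writes $\pi(s_{t+1:T}\mid s_t)$) and checking the first line of the recursion are worthwhile additions.
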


\begin{proof}
First, we expand the marginal posterior $\pi_*(\trajcurr)$ using the definition of $\policyopt$ from \autoref{eq:bayesian_inference}:
\begin{align*}
\pi_*(\trajcurr) &= \int \pi_*(\traj) d\trajfuture[t+1] \\
    &= \frac{1}{Z} \int \policy(\traj) e^{\frac{1}{\beta} \sum_{\tau=0}^{T} \reward(\state[\tau])} d\trajfuture[t+1] \\
    &= \frac{1}{Z} \policy(\trajcurr) e^{\frac{1}{\beta} \sum_{\tau=0}^{t-1} \reward(\state[\tau])} \int \policy(\trajfuture[t+1] \mid \state) e^{\frac{1}{\beta} \sum_{\tau=t}^{T} \reward(\state[\tau])} d\trajfuture[t+1] \\
    &= \frac{1}{Z} \policy(\trajcurr) e^{\frac{1}{\beta} \sum_{\tau=0}^{t-1} \reward(\state[\tau])} e^{\valuefunc(\state)}.
\end{align*}
We then use this expression to expand out the importance weight:
\begin{align*}
w_t &= \frac{\pi_*(\trajcurr)}{\policy(\trajcurr)} \\
    &= \frac{1}{Z} e^{\frac{1}{\beta} \sum_{\tau=0}^{t-1} \reward(\state[\tau])} e^{\valuefunc(\state)}.
\end{align*}
From this expression, we can isolate $w_{t-1}$ to get a recursive decomposition:
\begin{align*}
w_t &= \left( \frac{1}{Z} e^{\frac{1}{\beta} \sum_{\tau=0}^{t-2} \reward(\state[\tau])} e^{\valuefunc(\state[t-1])} \right) \cdot \frac{e^{\frac{1}{\beta} \reward(\state[t-1])} e^{\valuefunc(\state)}}{e^{\valuefunc(\state[t-1])}} \\
    &= w_{t-1} \cdot e^{\valuefunc(\state) - \valuefunc(\state[t-1]) + \frac{\reward(\state[t-1])}{\beta}}.
\end{align*}
\end{proof}

\subsection{SMC Improves the Average-Case Trajectory}\label{app:amc_average}
Here, we show in a simplified setting that SMC will always improve the expected average reward across $N$ trajectories. In a setting with sparse, binary rewards, we can express the usefulness of a state $s_t^{(i)}$ by its success probability, $p_t^{(i)} = P(r(s_T^{(i)}) = 1 \mid s_t^{(i)})$.

\begin{itemize}
    \item Given $N$ different states $\{s_t^{(1)}, \ldots, s_t^{(N)}\}$ and any step $t$, the expected average reward over the $N$ trajectories in a single Best-of-N run is:
    \[
    \mathbb{E}[\mathrm{RewardN}] = \frac{1}{N} \sum_{i=1}^N p_t^{(i)}
    \]
    
    \item On the other hand, with AMC (assuming a perfectly learned value function) let us assume resampling occurs once at timestep $t$. Here, we have a simplified value function and weight, so that:
    \begin{itemize}
        \item $V(s_t^{(i)}) = \log[e p_t^{(i)} + (1 - p_t^{(i)})]$. This comes from the definition of $V(\cdot)$ (under \autoref{eq:importance_weight_recursive}), setting $r(s_t) = 0$ for $t \neq T$, and from expanding out the expectation in the definition with binary rewards. (For this proof we set $\beta=1$.)
        \item The unnormalized weights are:
        \[
        w_t^{(i)} = e^{V(s_t^{(i)}) - V(s_0^{(i)})} = \frac{e p_t^{(i)} + 1 - p_t^{(i)}}{e p_0^{(i)} + 1 - p_0^{(i)}} = \frac{(e - 1)p_t^{(i)} + 1}{(e - 1)p_0^{(i)} + 1}
        \]
        \item We also let $\bar{w}_t^{(i)}$ for $i=1, \ldots, N$ be the renormalized weights used for sampling at timestep $t$. Since all trajectories share the same initial state, the denominator cancels out during normalization, yielding the following value:
        \[
        \bar{w}_t^{(i)} = \frac{(e - 1)p_t^{(i)} + 1}{(e - 1)\sum_{j=1}^N p_t^{(j)} + N}
        \]
    \end{itemize}
\end{itemize}

The average resulting reward will be:
\begin{align*}
\mathbb{E}[\mathrm{RewardAMC}] &= \frac{1}{N} \sum_{k=1}^N \mathbb{E}_{\tilde{s}_t^{(k)} \sim \bar{w}_t} \left[ \tilde{p}_t^{(k)} \right] \\
&= \frac{1}{N} \sum_{k=1}^N \left( \sum_{i=1}^N \bar{w}_t^{(i)} p_t^{(i)} \right) \\
&= \sum_{i=1}^N \bar{w}_t^{(i)} p_t^{(i)} \\
&= \sum_{i=1}^N \left( \frac{(e - 1)p_t^{(i)} + 1}{(e - 1)\sum_{j=1}^N p_t^{(j)} + N} \right) p_t^{(i)} \\
&= \frac{(e - 1)\sum_{i=1}^N (p_t^{(i)})^2 + \sum_{i=1}^N p_t^{(i)}}{(e - 1)\sum_{j=1}^N p_t^{(j)} + N} \\
&\ge \frac{(e - 1)\left[ \frac{1}{N}\left(\sum_{i=1}^N p_t^{(i)}\right)^2 \right] + \sum_{i=1}^N p_t^{(i)}}{(e - 1)\sum_{j=1}^N p_t^{(j)} + N} \quad \text{(Cauchy-Schwarz inequality)} \\
&= \frac{\frac{1}{N}\left(\sum_{i=1}^N p_t^{(i)}\right) \left[ (e - 1)\sum_{i=1}^N p_t^{(i)} + N \right]}{(e - 1)\sum_{j=1}^N p_t^{(j)} + N} \\
&= \frac{1}{N} \sum_{i=1}^N p_t^{(i)} \\
&= \mathbb{E}[\mathrm{RewardN}]
\end{align*}

where the \textbf{inequality is strict} unless all states have the same success probability (which is essentially never true in practice). Hence, AMC will nearly always result in higher average reward compared to the average of the $N$ trajectories in Best-of-N. We note that in this theoretical setup, AMC will not always have higher maximum reward, particularly when there already exist very strong trajectories (e.g., a two-trajectory binary success case where one trajectory is guaranteed to succeed, and one is guaranteed to fail).

\section{Experiment Setup}
\label{app:setup}
\noindent \textbf{Hardware and Computational Costs.} All AMC experiments were conducted on a workstation equipped with two NVIDIA RTX 6000 Ada Generation GPUs (48GB VRAM each) and 1TB of system RAM. Under this configuration, the training duration for the Llama-3.1-8B value model was approximately 2 hours, while the Llama-3.2-11B value model required roughly 3 hours to complete. A comprehensive specification of hyperparameters is detailed in \autoref{tab:setup}.

\noindent \textbf{Experimental Protocol.} To ensure statistical robustness and account for stochasticity in model generation, every experiment was executed across three independent trials using distinct random seeds. We report the mean scores alongside their corresponding standard errors to provide a clear measure of performance stability and variance. 

\begin{table}[h] 
\centering
\caption{Detailed implementation settings.}
\scalebox{0.85}{
\begin{tblr}{
  colspec = {Q[l,m,2.2cm] l l}, 
  vlines,
  hline{1,2,12,17} = {-}{}, 
  cell{2}{1} = {r=10}{l,m}, 
  cell{12}{1} = {r=5}{l,m}, 
  cell{15,16}{3} = {m},
}
\textbf{Category} & \textbf{Hyperparameter} & \textbf{Value} \\
Value Model Optimization & Optimizer & AdamW \\
 & Learning rate & $1 \times 10^{-5}$ \\
 & Scheduler & Cosine \\
 & Warmup ratio & $1 \times 10^{-1}$ \\
 & Max gradient norm & 1 \\
 & Loss function & Exponentially weighted MSE \\
 & Training epochs & 3 \\
 & LoRA rank ($r$) & 8 \\
 & LoRA alpha ($\alpha$) & 16 \\
 & Output format & Scalar (float) value \\
Inference Configuration & Temperature ($\beta$) & 1.0 \\
 & Top-$p$ & 0.95 \\
 & Max output length & 4096 \\
 & Max steps & {WebShop: 10 \\ SciWorld: 20 \\ TextCraft: 20} \\
 & Resampling steps & {WebShop: [6] \\ SciWorld: [4, 12] \\ TextCraft: [4]} \\
\end{tblr}
}
\label{tab:setup}
\end{table}

\noindent \textbf{Datasets.} We evaluate our framework on three distinct environments.
\textbf{WebShop} is a simulated e-commerce environment that evaluates an agent’s ability to navigate a multi-step web interface to purchase products satisfying multi-attribute constraints. It features an open-ended action space and a continuous reward based on attribute overlap with the target item. \textbf{SciWorld} evaluates long-horizon reasoning through elementary science tasks within a text-based simulator. This environment features a massive combinatorial action space of approximately 200K action-object combinations per step. Success requires strict adherence to procedural instructions, as deviation from the required sequence can trigger a terminal failure reward of -1. Performance is measured by sub-goal milestones, where agents earn incremental points for completing discrete required stages of an experiment. \textbf{TextCraft} is a Minecraft-inspired benchmark where agents craft items using hierarchical recipes. Using three primary actions (get, craft, and inventory), agents must manage compositional dependencies to earn a sparse binary reward of 1, granted only upon successful creation of the target item. While the environment features recipes across four levels of depth, we evaluated tasks at depths 3 and 4 since performance on lower-depth tasks has already reached saturation with current policy models \cite{agentgym-rl}.

\noindent \textbf{Training Set Construction.} We first sampled $\mathcal{G}$ trajectories per task and only included in our dataset the sample with the highest cumulative reward. In total, we generated $M=$1.4k trajectories for WebShop and 1.1k for SciWorld, both using $\mathcal{G}=3$. For TextCraft, given the limited availability of only 44 tasks, we set $\mathcal{G}=8$ to promote more diversity in the sampled trajectories. Following previous works \citep{agentgym, agentgym-rl}, the maximum number of steps was fixed to $T=10$ for WebShop and $T=20$ for both SciWorld and TextCraft. By treating each individual step within a trajectory as a separate data point (see \autoref{sec:learning_value}), we produced approximately $P=14$k, $22$k, and $880$ samples for WebShop, SciWorld, and TextCraft, respectively, which we then further split 80\%/20\% for our training and validation sets.

\section{Prompt Templates}
\label{app:templates}

\begin{tcolorbox}[title=ReAct Prompt Template for WebShop, colback=gray!5, colframe=black]
\small
\texttt{You are web shopping.\\
I will give you instructions about what to do.\\
You have to follow the instructions.\\
Every round I will give you an observation and a list of available actions, you have to respond an action based on the state and instruction.\\
You can use search action if search is available.\\
You can click one of the buttons in clickables.\\
An action should be of the following structure:\\
search[keywords]\\
click[value]\\
If the action is not valid, perform nothing.\\
Keywords in search are up to you, but the value in click must be a value in the list of available actions.\\
Remember that your keywords in search should be carefully designed.\\
Your response should use the following format:\\
\\
Thought:\\
I think ... \\
\\
Action: \\
click[something]}
\end{tcolorbox}

\begin{tcolorbox}[title=ReAct Prompt Template for SciWorld, colback=gray!5, colframe=black]
\small
\texttt{You are an agent for science world. Every round I will give you an observation, you have to respond an action based on the observation to finish the given task. Here are the actions you may take:\\
\\
$[${"action": "open/close OBJ", "description": "open/close a container"}, \\
{"action": "de/activate OBJ", "description": "activate/deactivate a device"},\\
{"action": "connect OBJ to OBJ", "description": "connect electrical components"},\\
{"action": "disconnect OBJ", "description": "disconnect electrical components"},\\
{"action": "use OBJ [on OBJ]", "description": "use a device/item"},\\
{"action": "look around", "description": "describe the current room"},\\
{"action": "look at OBJ", "description": "describe an object in detail"},\\
{"action": "look in OBJ", "description": "describe a container\'s contents"},\\
{"action": "read OBJ", "description": "read a note or book"},\\
{"action": "move OBJ to OBJ", "description": "move an object to a container"},\\
{"action": "pick up OBJ", "description": "move an object to the inventory"},\\
{"action": "put down OBJ", "description": "drop an inventory item"},\\
{"action": "pour OBJ into OBJ", "description": "pour a liquid into a container"},\\
{"action": "dunk OBJ into OBJ", "description": "dunk a container into a liquid"},\\
{"action": "mix OBJ", "description": "chemically mix a container"},\\
{"action": "go to LOC", "description": "move to a new location"},\\
{"action": "eat OBJ", "description": "eat a food"},\\
{"action": "flush OBJ", "description": "flush a toilet"},\\
{"action": "focus on OBJ", "description": "signal intent on a task object"},\\
{"action": "wait", "description": "take no action for 10 iterations"},\\
{"action": "wait1", "description": "take no action for 1 iteration"},\\
{"action":"examine OBJ","description":"provides a description of the objects present on or in a receptacle."},\\
{"action": "task", "description": "describe current task"},\\
{"action": "inventory", "description": "list your inventory"}$]$\\
\\
Your response should use the following format:\\
Thought:\\
your thoughts.\\
\\
Action:\\
your next action}
\end{tcolorbox}

\begin{tcolorbox}[title=ReAct Prompt Template for TextCraft, colback=gray!5, colframe=black]
\small
\texttt{You are given few useful crafting recipes to craft items in Minecraft. Crafting commands are of the format "craft [target object] using [input ingredients]".\\
Every round I will give you an observation, you have to respond an action based on the state and instruction. You can "get" an object (ingredients) from the inventory or the environment, look-up the game inventory by "inventory", or "craft" (target) using any of the crafting commands.\\
Your output must strictly follow this format: \\
\\
"Thought: \\
your thoughts. \\
\\
Action: \\
your next action" \\
\\
Reminder: \\
1. Always specify the quantity when using "get" and "craft" commands.\\
- Example of get: get 1 lapis lazuli\\
- Example1 of craft: craft 1 blue dye using 1 lapis lazuli\\
- Example2 of craft: craft 1 golden carrot using 8 gold nugget, 1 carrot \\
2. When using "get" command, do not specify whether the item comes from the inventory or the environment.\\
3. You can use ONLY crafting commands provided, do not use your own crafting commands. However, if the crafting command uses a generic ingredient like "planks", you can use special types of the same ingredient e.g. "dark oak planks" in the command instead.
}
\end{tcolorbox}

\begin{tcolorbox}[title=Value Function Prompt Template for WebShop, colback=gray!5, colframe=black]
\small
\texttt{You are a value estimator for a web shopping task.\\
Your job is to estimate the expected future reward (return) from the current STATE, given the INSTRUCTION and the previous STATES, ACTIONS provided as context.\\
\\
Format of the input you will receive:\\
- INSTRUCTION: The original shopping request to evaluate progress against.\\
- HISTORY: Consecutive pairs of (STATE, ACTION) taken so far.\\
- NOW: Current STATE.\\
\\
Use the INSTRUCTION with HISTORY and NOW to infer progress and likelihood of success.\\
Focus on attributes, type, options, and price of shown products relative to the INSTRUCTION.\\
Based on your analysis, provide a score between 0.0 and 1.0.\\
\\
=== INSTRUCTION ===\\
\{Instruction\}\\
\\
=== HISTORY ===\\
\{History\}\\
\\
=== NOW ===\\
\{Current state\}}
\end{tcolorbox}

\begin{tcolorbox}[title=Value Function Prompt Template for SciWorld, colback=gray!5, colframe=black]
\small
\texttt{You are a value estimator for a science world task.\\
Your job is to estimate the expected future reward (return) from the current STATE, given the INSTRUCTION and the previous STATES, ACTIONS provided as context.\\
\\
Format of the input you will receive:\\
- INSTRUCTION: The original scientific request to evaluate progress against.\\
- HISTORY: Consecutive pairs of (STATE, ACTION) taken so far.\\
- NOW: Current STATE.\\
\\
Use the INSTRUCTION with HISTORY and NOW to infer progress and likelihood of success.\\
Focus on milestones, such as locating the correct room, acquiring required tools, and following the specific steps (e.g., 'focus', 'interaction') outlined in the INSTRUCTION.\\
Heavily penalize states where the observation is 'No known action matches that input', as this indicates the agent is stuck in an invalid command loop or syntax error.\\
Based on your analysis, provide a score between -1.0 and 1.0.\\
\\
=== INSTRUCTION ===\\
\{Instruction\}\\
\\
=== HISTORY ===\\
\{History\}\\
\\
=== NOW ===\\
\{Current state\}}
\end{tcolorbox}

\begin{tcolorbox}[title=Value Function Prompt Template for TextCraft, colback=gray!5, colframe=black]
\small
\texttt{You are a value estimator for a Minecraft crafting task.\\
Your job is to estimate the expected future reward (return) from the current STATE, given the INSTRUCTION and the previous STATES, ACTIONS provided as context. \\
\\
Format of the input you will receive: \\
- INSTRUCTION: The original crafting request to evaluate progress against. \\
- HISTORY: Consecutive pairs of (STATE, ACTION) taken so far. \\
- NOW: Current STATE. \\
\\
Use the INSTRUCTION with HISTORY and NOW to infer progress and likelihood of success.\\
Focus on the crafting material, intermediate crafting results and quantities relative to the INSTRUCTION.\\
Based on your analysis, provide a score between 0.0 and 1.0.\\
\\
=== INSTRUCTION ===\\
\{Instruction\}\\
\\
=== HISTORY ===\\
\{History\}\\
\\
=== NOW ===\\
\{Current state\}}
\end{tcolorbox}

\begin{tcolorbox}[title=Zero-shot Prompt Template for WebShop, colback=gray!5, colframe=black]
\small
\texttt{...\textbf{(Preamble identical to WebShop value function template)} \\
First, analyze the real task data provided below.\\
\\
=== INSTRUCTION ===\\
\{Instruction\}\\
\\
=== HISTORY ===\\
\{History\}\\
\\
=== NOW ===\\
\{Current state\}\\
\\
Your response MUST be a single floating-point number between 0.0 and 1.0, and NOTHING ELSE.\\
\\
Score:}
\end{tcolorbox}

\begin{tcolorbox}[title=Zero-shot Prompt Template for SciWorld, colback=gray!5, colframe=black]
\small
\texttt{...\textbf{(Preamble identical to SciWorld value function template)}\\
First, analyze the real task data provided below.\\
\\
=== INSTRUCTION ===\\
\{Instruction\}\\
\\
=== HISTORY ===\\
\{History\}\\
\\
=== NOW ===\\
\{Current state\}\\
\\
Your response MUST be a single floating-point number between -1.0 and 1.0, and NOTHING ELSE.\\
\\
Score:\\}
\end{tcolorbox}

\begin{tcolorbox}[title=Zero-shot Prompt Template for TextCraft, colback=gray!5, colframe=black]
\small
\texttt{...\textbf{(Preamble identical to TextCraft value function template)} \\
=== INSTRUCTION ===\\
\{Instruction\}\\
\\
=== HISTORY ===\\
\{History\}\\
\\
=== NOW ===\\
\{Current state\}\\
\\
Your response MUST be a single floating-point number between 0.0 and 1.0, and NOTHING ELSE. \\
\\
Score:
}
\end{tcolorbox}

\begin{tcolorbox}[title=FoA Prompt Template for WebShop, colback=gray!5, colframe=black]
\small
\texttt{Given an item to purchase and a trajectory that aims to buy an item that exactly matches the specification, which corresponds to the ideal score of 10, analyze the following trajectory, then at the last line conclude "Thus the correctness score is ", where score is an integer from 1 to 10.\\
\\
Here are some examples\\
...\textbf{(Same in-context examples as \citet{klein2025fleetagentscoordinatedproblem})}}\\
\end{tcolorbox}

\begin{tcolorbox}[title=ReflAct Prompt Template for SciWorld, colback=gray!5, colframe=black]
\small
\texttt{You are a helpful assistant to do some scientific experiment in an environment. \\
In the environment, there are several rooms: kitchen, foundry, workshop, bathroom, outside, living room, bedroom, greenhouse, art studio, hallway\\
You should explore the environment and find the items you need to complete the experiment.\\
You can teleport to any room in one step.\\
All containers in the environment have already been opened, you can directly get items from the containers.\\
For each of your turn, you will be given the observation of the last turn.\\
You should first reflect on the agent’s state, including the location, inventory, and focused object, in relation to the task goal. \\
Then, output the action for this turn. Your output must strictly follow this format:\\
\\
Reflection: \\
your reflection.\\
\\
Action: \\
your next action.\\
\\
The available actions are:\\
open OBJ: open a container\\
close OBJ: close a container\\
activate OBJ: activate a device\\
deactivate OBJ: deactivate a device\\
connect OBJ to OBJ: connect electrical components\\
disconnect OBJ: disconnect electrical components\\
use OBJ [on OBJ]: use a device/item\\
look around: describe the current room\\
examine OBJ: describe an object in detail\\
look at OBJ: describe a container’s contents\\
read OBJ: read a note or book\\
move OBJ to OBJ: move an object to a container\\
pick up OBJ: move an object to the inventory\\
pour OBJ into OBJ: pour a liquid into a container\\
mix OBJ: chemically mix a container\\
teleport to LOC: teleport to a specific room\\
focus on OBJ: signal intent on a task object\\
wait: task no action for 10 steps\\
wait1: task no action for a step\\
\\
Example:\\
...\textbf{(Same in-context examples as \citet{kim2025reflactworldgroundeddecisionmaking})}}\\
\end{tcolorbox}

\section{Additional Experiments}
\label{app:experiments}
\subsection{Resampling Steps}
\label{app:resampling}
\subsubsection{Task-Specific Fixed-Step Resampling}\label{app:resampling_fixed}

\begin{figure}[htbp]
    \centering
    \includegraphics[width=0.6\textwidth]{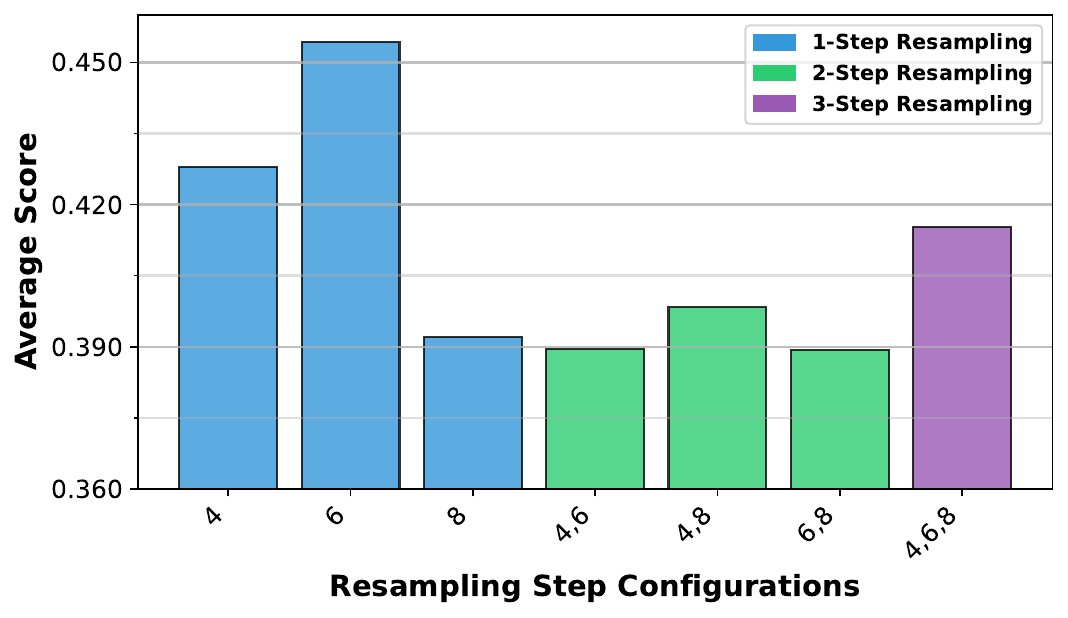}
    \caption{WebShop performance across different resampling steps.}
    \label{fig:resample_webshop}
\end{figure}

\begin{figure}[htbp]
    \centering
    \includegraphics[width=0.7\textwidth]{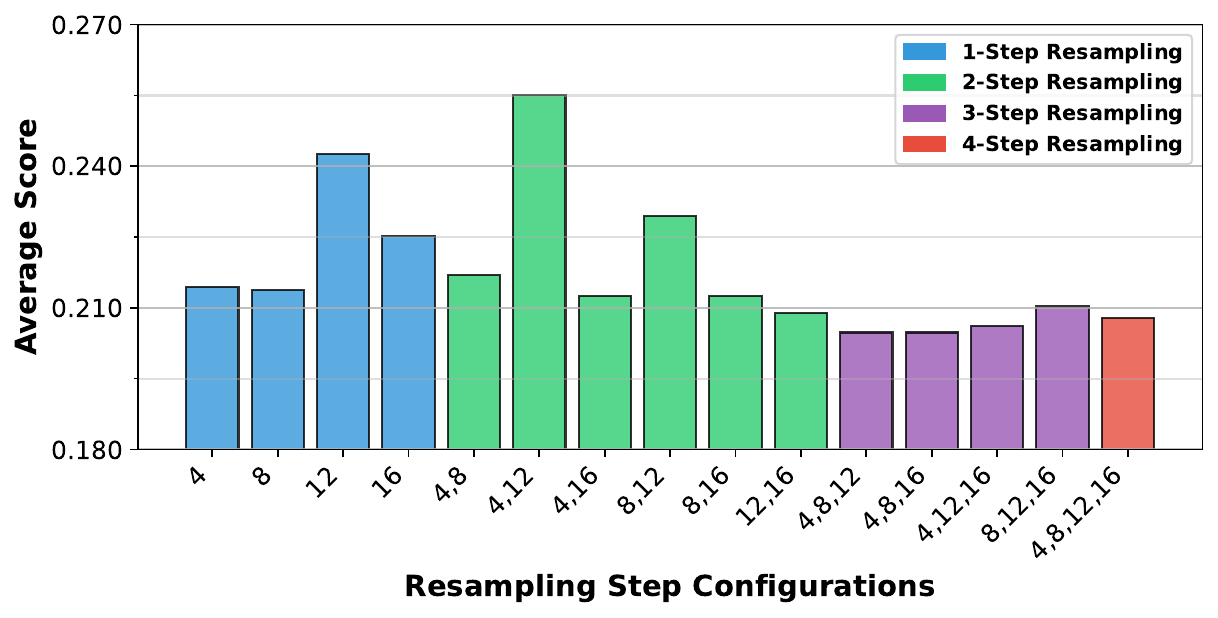}
    \caption{SciWorld performance across different resampling steps.}
    \label{fig:resample_sciworld}
\end{figure}

\begin{figure}[htbp]
    \centering
    \includegraphics[width=0.7\textwidth]{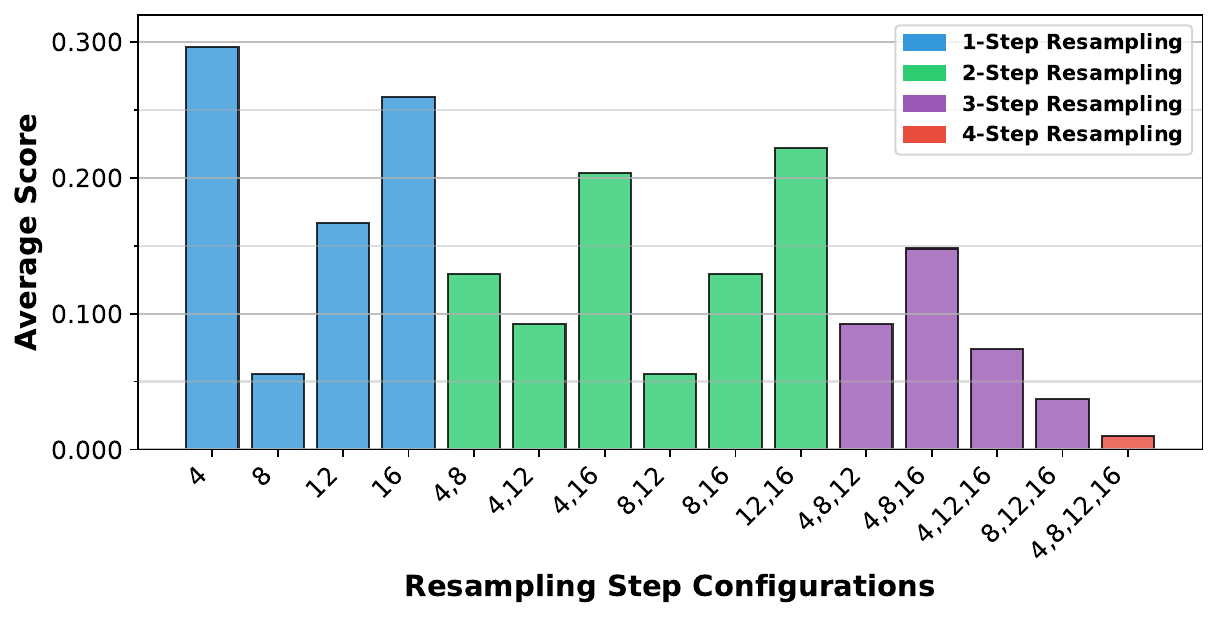}
    \caption{TextCraft performance across different resampling steps.}
    \label{fig:resample_textcraft}
\end{figure}

Our ablation study on resampling step configurations with 5 trajectories (\autoref{fig:resample_webshop}, \autoref{fig:resample_sciworld}, and \autoref{fig:resample_textcraft}) demonstrates that agent performance is highly adaptable to the strategic timing of value function interventions. Across all tested environments, results indicate that increasing resampling frequency does not inherently guarantee higher performance. Instead, AMC’s strength lies in its ability to provide high-impact, sparse corrections at pivotal moments. For example, in TextCraft, a dense 4-step resampling approach actually hindered performance compared to more focused 1-step or 2-step interventions. Peak efficiency is reached through task-specific timing: a single intervention at step 6 for WebShop, a dual-step approach at steps 4 and 12 for SciWorld, and an early correction at step 4 for TextCraft. These results highlight that AMC’s effectiveness is driven by precision-targeted interventions rather than sheer computational volume, allowing for significant performance gains while maintaining minimal overhead.

\subsubsection{Dynamic Resampling with Effective Sample Size}\label{app:resampling_ess}

Empirical analysis in \appref{app:resampling_fixed} reveals that the optimal resampling frequency is inherently task-dependent. For instance, WebShop requires early-to-mid trajectory interventions to preserve the enough step budget to finalize transactions. To confirm this, we compare fixed-step resampling against a dynamic criterion based on the Effective Sample Size (ESS) \cite{doucet2000sequential}. Specifically, we resample at step $t$ using normalized weights $\tilde w_t^{(i)}$ whenever $ESS := (\sum_{i=1}^{N}(\tilde{w}_t^{(i)})^2)^{-1} < N\rho$ for a given threshold $\rho\in (0,1)$. While ESS is designed to mitigate trajectory degeneracy by filtering low-weight samples, \autoref{tab:ablation_ess} shows that fixed-step selections consistently outperform ESS across various thresholds on both WebShop and SciWorld. Beyond superior performance, fixed-step resampling is significantly more computationally efficient. While ESS requires value network evaluation at every step, fixed-step allows the recursive update rule (\autoref{eq:importance_weight_recursive}) to telescope, effectively skipping value estimation at most steps.

\begin{table}[h] 
\centering
\caption{Comparison of dynamic resampling and fixed-step resampling for AMC.}
\scalebox{0.8}{
\begin{tblr}{
  colspec = {l l l l c}, 
  vlines, 
  hline{1,2,7,12} = {-}{}, 
  cell{2,7}{1,2} = {r=5}{l, m}, 
  cell{3,8}{3} = {r=4}{l, m},
  cell{2,7}{3-5} = {bg=orange!10}, 
  cell{3-6,8-11}{3-5} = {bg=green!10},
  row{1} = {font=\bfseries},
  row{1-Z} = {rowsep=2pt} 
}
Dataset & Policy Model & Method & Resampling ($\rho$) & Score \\
WebShop & Llama-3.2-11B & Best-of-5 & - & 0.385 ($\pm$0.015) \\
        &               & AMC       & ESS (0.1)   & 0.387 ($\pm$0.015) \\
        &               &           & ESS (0.5)   & 0.402 ($\pm$0.017) \\
        &               &           & ESS (0.9)   & \textbf{0.435 ($\pm$0.013)} \\
        &               &           & Fixed-step & \textbf{0.454 ($\pm$0.009)} \\
SciWorld & Llama-3.1-8B & Best-of-5 & - & 0.206 ($\pm$0.010) \\
         &              & AMC       & ESS (0.1)   & 0.211 ($\pm$0.007) \\
         &              &           & ESS (0.5)   & 0.216 ($\pm$0.006) \\
         &              &           & ESS (0.9)   & 0.228 ($\pm$0.006) \\
         &              &           & Fixed-step & \textbf{0.255 ($\pm$0.008)} \\
\end{tblr}
}
\label{tab:ablation_ess}
\end{table}\label{tab:value_estimation}

\begin{table}[h]
\centering
\caption{Comparison of value estimation objectives when $N=15$.}
\scalebox{0.85}{
\begin{tblr}{
  colspec = {l l l c}, 
  vlines, 
  hline{1,2,3,4} = {-}{}, 
  cell{2}{1,2} = {r=2}{l, m}, 
  row{1} = {font=\bfseries},
  row{1-Z} = {rowsep=2pt} 
}
Dataset & Policy Model & Sampling & Score \\
SciWorld & Llama-3.1-8B & \autoref{eq:loss_func_real} & 0.347 ($\pm$ 0.015) \\
         &              & \autoref{eq:loss_func_ideal} & 0.332 ($\pm$ 0.005) \\
\end{tblr}
}
\label{tab:ablation_sampling}
\end{table}

\subsection{Value Estimation Approaches}
\label{app:value_estimation}

As discussed in \autoref{sec:learning_value}, we approximate the ideal value $V$ in  \autoref{eq:loss_func_ideal}, which requires a soft maximum over multiple trajectories per task, with the single trajectory formulation in \autoref{eq:loss_func_real}. Multiple sampling has also been used in MCMC frameworks for uncertainty quantification in LLMs \cite{ross2026textual}. To evaluate the empirical impact of this approximation bias, we compare the performance of both methods on SciWorld in \autoref{tab:ablation_sampling}. For \autoref{eq:loss_func_ideal}, we use 3 trajectory estimates to better approximate the log-sum-exp.
We find that the single trajectory approximation yields comparable performance to the multiple trajectory baseline. This indicates that \autoref{eq:loss_func_real} serves as an effective surrogate for \autoref{eq:loss_func_ideal}. In the context of our training set construction approach (\autoref{app:setup}), in which we sample multiple trajectories per task and only include the max-reward trajectory, one can view \autoref{eq:loss_func_real} as using a hard maximum to approximate the soft maximum of \autoref{eq:loss_func_ideal}.

\subsection{Impact of Base Model Selection on Value Function}
\label{app:base_model}
We evaluated the AMC framework using an additional base model architecture for the value model, Qwen-3-4B, alongside open-weight policies, to verify that the value function's effectiveness is not dependent on a specific base model from the Llama-3 family. As illustrated in \autoref{tab:generalizability}, a Qwen-3-4B based value model achieves competitive performance across all benchmarks, closely trailing the results of the Llama-3 family. This confirms the architectural generalizability of the value function and suggests that overall performance can be further optimized by selecting the base model best suited to a specific benchmark.

\begin{table}[h]
\centering
\caption{Sensitivity of the value function backbone across benchmarks with 5 trajectories.}
\scalebox{0.8}{
\begin{tblr}{
  colspec = {l l l c},
  vlines,
  hline{1,2,4,6,8} = {-}{}, 
  cell{2,4,6}{1} = {r=2}{l}, 
  cell{2,4,6}{2} = {r=2}{l},
}
\textbf{Dataset} & \textbf{Policy Model} & \textbf{Value Model} & \textbf{Score} \\
WebShop   & Llama-3.2-11B & Llama-3.2-11B & 0.454 ($\pm$0.014) \\
          &               & Qwen-3-4B     & 0.444 ($\pm$0.017) \\         
SciWorld  & Llama-3.1-8B  & Llama-3.1-8B  & 0.249 ($\pm$0.011) \\
          &               & Qwen-3-4B     & 0.236 ($\pm$0.011) \\
TextCraft & Llama-3.2-11B & Llama-3.2-11B & 0.234 ($\pm$ 0.012) \\
          &               & Qwen-3-4B     & 0.222 ($\pm$ 0.043) \\
\end{tblr}}
\label{tab:generalizability}
\end{table}

\subsection{GRPO Baseline and Cost Analysis}
\label{app:grpo_cost}

Since \citet{agentgym-rl} did not release their GRPO models, we used a 8xA100 40GB node to fine-tune our own GRPO baselines for 2 epochs, removing a leak of 30\% of test tasks identified in the AgentGym-RL codebase. As shown in \autoref{tab:grpo_cost_ours}, GRPO with Best-of-$N$ provides only marginal improvements over GRPO across different policies, as the GRPO policy is already converged with a low standard error across 3 trials. When considering GPT-5.1 with a Qwen-2.5-7B value model, AMC significantly outperforms GRPO models with Best-of-$N$, showing the practicality of implementing our approach on a black-box policy. When considering the same policy model as GRPO (i.e., Qwen-2.5-3B), we outperform GRPO at the same $N$ while saving on total training and inference costs. Note that cost comparisons use AWS on-demand rates as of April 2026 (USD 21.96/hr for 8xA100, which is used only for GRPO training; USD 5.27/hr for an NVIDIA RTX 6000 96GB, which is the most comparable currently available server GPU to our main evaluation environment).

\begin{table*}[h] 
\centering
\caption{Performance and cost efficiency comparison between GRPO and AMC across varying numbers of trajectories.}
\scalebox{0.85}{
\begin{tblr}{
colspec = {l l l l c c c}, 
  vlines,
  hline{1,2,6,11} = {-}{},  
  hline{5,9} = {1}{solid},
  cell{2,6}{1} = {r=3}{l, m}, 
  cell{5}{1}   = {l, m},      
  cell{9}{1}   = {r=2}{l, m}, 
  cell{2}{3} = {l, m, bg=gray!10}, 
  cell{3}{3} = {r=2}{l, m, bg=orange!10}, 
  cell{5}{3} = {l, m, bg=green!10}, 
  cell{6}{3} = {l, m, bg=gray!10}, 
  cell{7}{3} = {r=2}{l, m, bg=orange!10},
  cell{9}{3} = {r=2}{l, m, bg=green!10}, 
  cell{2}{4} = {l, m, bg=gray!10}, 
  cell{3}{4} = {r=2}{l, m, bg=orange!10},
  cell{5}{4} = {l, m, bg=green!10},
  cell{6}{4} = {l, m, bg=gray!10},
  cell{7}{4} = {r=2}{l, m, bg=orange!10},
  cell{9}{4} = {r=2}{l, m, bg=green!10},
  cell{2,6}{2,5,6,7} = {bg=gray!10},
  cell{3,4,7,8}{2,5,6,7} = {bg=orange!10},
  cell{5,9,10}{2,5,6,7} = {bg=green!10},
  row{1} = {font=\bfseries},
  row{1-Z} = {rowsep=2pt}
}
Training & Method & Policy Model & Value Model & Score & Training Cost (USD) & Inference Cost (USD) \\
GRPO & ReAct & Qwen-2.5-7B & - & 0.183 ($\pm$ 0.001) & 702.64 & \textbf{10.54} \\
     & Best-of-5 & Qwen-2.5-7B&- & 0.190 ($\pm$ 0.001) & 702.64 & 55.32 \\
     & Best-of-20 & & & 0.194 ($\pm$ 0.001) & 702.64 & 231.80 \\
-    & AMC ($N=5$) & GPT-5.1 & Qwen-2.5-7B & \textbf{0.518 ($\pm$ 0.030)} & \textbf{138.39} & 57.97 \\
GRPO & ReAct & Qwen-2.5-3B & - & 0.182 ($\pm$ 0.001) & 395.24 & \textbf{5.27} \\
     & Best-of-5 & Qwen-2.5-3B&- & 0.185 ($\pm$ 0.003) & 395.24 & 12.28 \\
     & Best-of-20 & & & 0.187 ($\pm$ 0.001) & 395.24 & 50.05 \\
-    & AMC ($N=5$) &Qwen-2.5-3B & Qwen-2.5-3B & 0.133 ($\pm$ 0.006) & 147.51 & 47.41 \\
     & AMC ($N=20$) & & & \textbf{0.216 ($\pm$ 0.004)} & \textbf{147.51} & 210.73 \\
\end{tblr}
}
\label{tab:grpo_cost_ours}
\end{table*}

\subsection{Evaluating for Average Trajectory Performance}\label{app:eval_average}

Our primary evaluation relies on the highest-reward trajectory. This is the most practical metric, as downstream applications typically deploy only the single best outcome. However, to ensure our results are robust against high-variance noise, we expand our evaluation to include an alternative metric. In  \autoref{tab:ablation_average}, we evaluate the mean reward of the top 5 trajectories (out of 15 generated) for each task, averaged these results across all tasks in the dataset. Using Llama-based policies across WebShop, SciWorld, and TextCraft, AMC consistently outperforms the  average top-5-of-15 baseline (derived from the Best-of-15). These results indicate that AMC significantly elevates the overall quality of the top-performing trajectories within the trajectory pool, rather than relying on isolated high-reward outliers to achieve a superior score. These results are corroborated by the analysis in \appref{app:amc_average}, in which we show that, on expectation, AMC is guaranteed to improve the average reward across trajectories.

\begin{table}[h] 
\centering
\caption{Comparison of the averaged top 5 highest reward trajectories.}
\scalebox{0.85}{
\begin{tblr}{
  colspec = {l l c}, 
  vlines, 
  hline{1,2,4,6,8} = {-}{}, 
  cell{2,4,6}{2-3} = {bg=orange!10},   
  cell{3,5,7}{2-3} = {bg=green!10}, 
  cell{2,4,6}{1} = {r=2}{l, m}, 
  row{1} = {font=\bfseries},
  row{1-Z} = {rowsep=2pt} 
}
Dataset & Method & Score \\
WebShop & Best-of-15 (Top-5) & 0.202 ($\pm$ 0.014) \\
        & AMC (Top-5)        & \textbf{0.371 ($\pm$ 0.013)} \\
SciWorld & Best-of-15 (Top-5) & 0.183 ($\pm$ 0.010) \\
         & AMC (Top-5)        & \textbf{0.226 ($\pm$ 0.014)} \\
TextCraft & Best-of-15 (Top-5) & 0.215 ($\pm$ 0.030) \\
          & AMC (Top-5)        & \textbf{0.462 ($\pm$ 0.020)} \\         
\end{tblr}
}
\label{tab:ablation_average}
\end{table}

\subsection{Additional Datasets}
\label{app:additional_datasets}

To further evaluate the robustness of our approach, we test AMC on two additional tasks from the AgentBoard benchmark \cite{ma2024agentboardanalyticalevaluationboard}. Unlike our previous benchmarks, these datasets emphasize tool use, temporal synthesis, and multi-hop relational reasoning:

\begin{itemize}
    \item \textbf{Weather}: This task evaluates the agent's ability to perform multi-step information retrieval using a specialized Weather API. The agent navigates 18 distinct actions (e.g., querying historical data, local forecasts) to answer complex queries such as comparing precipitation across different time periods. Success requires precise tool-calling and temporal logic. We used a strict binary reward (1 only for an exact match with the ground truth), a maximum horizon of $T=10$, and a resampling step of 4.
    
    \item \textbf{Movie}: This dataset tests the agent's ability to perform relational queries within a large-scale cinematic database. With 16 available actions, the agent often executes multi-hop reasoning (e.g., identifying a specific director's filmography, filtering by release year, and then retrieving the lead actor's accolades). We utilized a strict binary reward (1 only for an exact match with the ground truth), a maximum horizon of $T=12$, and a resampling step of 6.
\end{itemize}
As shown in \autoref{tab:weather_movie_result}, AMC consistently matches or outperforms the Best-of-$N$ and SMC (Zero-shot) baselines across these new domains. These results demonstrate that AMC's performance gains are robust and generalizable to diverse tasks with varying requirements.

\begin{table}[h] 
\centering
\caption{Weather and Movie performance, obtained with a Llama-3.2-11B-based value model for SMC (Zero-shot) and AMC.}
\scalebox{0.85}{
\begin{tblr}{
  colspec = {l l l c},
  vlines,
  hline{1,2,23,Z} = {-}{solid},
  hline{9,16,30,37} = {2-4}{solid},
  hline{6,13,20,27,34,41} = {3-4}{solid},
  cell{2,23}{1} = {r=21}{m},
  cell{2,9,16,23,30,37}{2} = {r=7}{m},
  cell{2,9,16,23,30,37}{3,4} = {bg=gray!10},
  cell{3,4,10,11,17,18,24,25,31,32,38,39}{3,4} = {bg=orange!10},
  cell{5,12,19,26,33,40}{3,4} = {bg=green!10},
  cell{6,7,13,14,20,21,27,28,34,35,41,42}{3,4} = {bg=orange!10},
  cell{8,15,22,29,36,43}{3,4} = {bg=green!10},
  row{1} = {font=\bfseries},
  row{1-Z} = {rowsep=1pt}
}
Dataset & Policy Model & Method & Score \\
Weather & Llama-3.2-11B & ReAct & 0.433 ($\pm$ 0.041) \\
        &               & Best-of-5 & \textbf{0.517 ($\pm$ 0.020)} \\
        &               & SMC (Zero-shot) & \textbf{0.483 ($\pm$ 0.054)} \\
        &               & AMC & \textbf{0.533 ($\pm$ 0.020)} \\
        &               & Best-of-15 & \textbf{0.550 ($\pm$ 0.000)} \\
        &               & SMC (Zero-shot) & \textbf{0.550 ($\pm$ 0.000)} \\
        &               & AMC & \textbf{0.550 ($\pm$ 0.000)} \\
        & GPT-4.1-mini  & ReAct & 0.450 ($\pm$ 0.035) \\
        &               & Best-of-5 & 0.483 ($\pm$ 0.041) \\
        &               & SMC (Zero-shot) & 0.517 ($\pm$ 0.020) \\
        &               & AMC & \textbf{0.550 ($\pm$ 0.000)} \\
        &               & Best-of-15 & \textbf{0.533 ($\pm$ 0.020)} \\
        &               & SMC (Zero-shot) & \textbf{0.550 ($\pm$ 0.000)} \\
        &               & AMC & \textbf{0.550 ($\pm$ 0.000)} \\
        & GPT-5.1       & ReAct & 0.550 ($\pm$ 0.000) \\
        &               & Best-of-5 & \textbf{0.550 ($\pm$ 0.000)} \\
        &               & SMC (Zero-shot) & \textbf{0.550 ($\pm$ 0.000)} \\
        &               & AMC & \textbf{0.567 ($\pm$ 0.020)} \\
        &               & Best-of-15 & 0.550 ($\pm$ 0.000) \\
        &               & SMC (Zero-shot) & 0.550 ($\pm$ 0.000) \\
        &               & AMC & \textbf{0.583 ($\pm$ 0.020)} \\
Movie   & Llama-3.2-11B & ReAct & 0.417 ($\pm$ 0.041) \\
        &               & Best-of-5 & \textbf{0.833 ($\pm$ 0.041)} \\
        &               & SMC (Zero-shot) & \textbf{0.767 ($\pm$ 0.054)} \\
        &               & AMC & \textbf{0.783 ($\pm$ 0.020)} \\
        &               & Best-of-15 & 0.850 ($\pm$ 0.000) \\
        &               & SMC (Zero-shot) & \textbf{0.850 ($\pm$ 0.035)} \\
        &               & AMC & \textbf{0.883 ($\pm$ 0.020)} \\
        & GPT-4.1-mini  & ReAct & 0.750 ($\pm$ 0.061) \\
        &               & Best-of-5 & 0.800 ($\pm$ 0.035) \\
        &               & SMC (Zero-shot) & 0.817 ($\pm$ 0.020) \\
        &               & AMC & \textbf{0.850 ($\pm$ 0.000)} \\
        &               & Best-of-15 & \textbf{0.850 ($\pm$ 0.000)} \\
        &               & SMC (Zero-shot) & \textbf{0.850 ($\pm$ 0.000)} \\
        &               & AMC & \textbf{0.850 ($\pm$ 0.000)} \\
        & GPT-5.1       & ReAct & 0.867 ($\pm$ 0.020) \\
        &               & Best-of-5 & \textbf{0.883 ($\pm$ 0.020)} \\
        &               & SMC (Zero-shot) & 0.867 ($\pm$ 0.020) \\
        &               & AMC & \textbf{0.900 ($\pm$ 0.000)} \\
        &               & Best-of-15 & \textbf{0.883 ($\pm$ 0.020)} \\
        &               & SMC (Zero-shot) & \textbf{0.900 ($\pm$ 0.000)} \\
        &               & AMC & \textbf{0.900 ($\pm$ 0.000)} \\
\end{tblr}
}
\label{tab:weather_movie_result}
\end{table}

\section{Qualitative Analysis}
\label{app:qualitative_analysis}
\autoref{fig:example_webshop} provides a side-by-side comparison of two AMC trajectories on  WebShop, highlighting a critical divergence between the two at resampling step 6. In Case 1, the agent identifies a product candidate ([B09LM3H2F5]) that initially satisfies several key semantic and budgetary criteria. This partial alignment yields a high value score of 0.6, which encourages the agent to proceed with this trajectory, ultimately securing a successful 1.0 reward through early task completion. In contrast, Case 2 illustrates a representative failure mode. While the agent initiates the task with the same search query as Case 1, it fails to prioritize relevant items, eventually returning to the initial search interface at step 6. At this step, our value model assigns a low score of 0.1, identifying the trajectory as being far from the target state, from which it goes on to achieve a reward of 0.0. This qualitative comparison confirms that our value function effectively distinguishes between goal-aligned progress and unproductive behavior at resampling step 6.

The value model also effectively distinguishes between promising and non-promising trajectories within the SciWorld environment. For the qualitative evaluation, we focus on resampling step 12, the second-most effective intervention from our ablation study (\autoref{fig:resample_sciworld}), as it provides a clearer baseline for analyzing individual trajectory outcomes. As illustrated in \autoref{fig:example_sciworld} (left), by step 12, the agent has successfully located its target—a non-living object (finger painting). At this stage, the value model assigns a high score of 0.4 out of its range of $[-1, 1]$. This trajectory culminates at step 15, where the agent enters the bathroom and places the painting in the red box, achieving a perfect environment reward of 1.0. Conversely, the non-promising trajectory in \autoref{fig:example_sciworld} (right) illustrates the agent repeatedly performing redundant actions to access the bathroom despite the absence of a direct path from the hallway. This lack of progress results in a low value function score of 0.1 at step 12. Although the agent eventually identifies the target non-living item at step 18 and attempts to pick it up in the final step, it ultimately receives an environment reward of 0.0. This failure stems from the agent's inability to satisfy necessary sub-goals, such as the initial ``focus on" action required by the objective.

Furthermore, the value function can accurately provide more granular comparisons between relatively similar states. \autoref{fig:example_textcraft} presents a qualitative analysis of two partial trajectories within the TextCraft domain. While both trajectories align with the optimal policy, Case 1 has progressed further along the crafting dependency tree compared to Case 2. Reflecting this state progression, the value function assigns a higher value estimate to the former (0.275) than the latter (0.191). This demonstrates that the value function can identify states closer to the terminal condition possess a higher probability of success.

\begin{figure}[h]
\centering
\caption{Comparison of agent trajectories in WebShop. We highlight the divergence at the resampling step 6, where the value function differentiates between promising and non-promising ones.}
\small
\begin{minipage}{0.49\textwidth}
    \begin{tcolorbox}[colback=white, colframe=black, arc=0mm, title=Case 1: Promising (Success), height=16.5cm]
        \textbf{Goal:} Find me hand crafted candy \& chocolate for gift set, valentine day with style: happy birthday truffle chocolate, and price lower than 60.00 dollars.
        \vspace{1mm} \hrule \vspace{2mm}
        \textbf{[STEP 1]}\\
        \textbf{Action:}\\
        \texttt{search[hand crafted candy gift set valentine day truffle chocolate birthday price:60]}\\
        
        \dots (Search around items)        

        \vspace{3mm}
        \textbf{[STEP 6]}\\ 
        \textbf{Observation:} Item $[$B09LM3H2F5$]$ partially matches gift set, truffle, and $<$ \$60. \\
        \textbf{Value Function: 0.6}\\
        \textbf{Action:} \texttt{click[B09LM3H2F5]}
        
        \vspace{3mm}
        \textbf{[STEP 7]} \\
        \textbf{Observation:} Options for style refinement: \textit{happy birthday truffle chocolate, holiday truffle chocolate, truffle collection chocolate}. \\
        \textbf{Action:} \texttt{click[style]}
        
        \vspace{3mm}
        \textbf{[STEP 8]} \\
        \textbf{Observation:} Specific style options available. \\        
        \textbf{Action:} \\
        \texttt{click[happy birthday truffle chocolate]}
       
        \vspace{3mm}
        \textbf{[STEP 9]}\\
        \textbf{Observation:} Identified product matching all criteria. \\
        \textbf{Action:} \texttt{click[Buy Now]}\\        
        \vfill
        \hrule \vspace{1mm}
        \textbf{Environment Reward: 1.0 (Success)}
        
    \end{tcolorbox}
\end{minipage}
\hfill
\begin{minipage}{0.49\textwidth}
    \begin{tcolorbox}[colback=white, colframe=black, arc=0mm, title=Case 2: Non-Promising (Failure), height=16.5cm]
        \textbf{Goal:} Find me hand crafted candy \& chocolate for gift set, valentine day with style: happy birthday truffle chocolate, and price lower than 60.00 dollars.
        \vspace{1mm} \hrule \vspace{2mm}
        \textbf{[STEP 1]}\\
        \textbf{Action:}\\
        \texttt{search[hand crafted candy chocolate gift set valentine happy birthday truffle price lower 60]}\\
        
        \dots (Search around items)        

        \vspace{3mm}
        \textbf{[STEP 6]} \\
        \textbf{Observation:} Back to the first page to search. \\
        \textbf{Value Function: 0.1} \\
        \textbf{Action:} \\
        \texttt{search[hand crafted candy chocolate gift set valentine happy birthday truffle price lower 60]}
        
        \vspace{3mm}
        \textbf{[STEP 7]} \\
        \textbf{Observation:} Search results for previous request. \\
        \textbf{Action:} \\
        \texttt{search[hand crafted candy chocolate gift set valentine happy birthday truffle chocolate gift box]}
        
        \vspace{3mm}
        \textbf{[STEP 8]}\\
        \textbf{Observation:} Search results for previous request. \\
        \textbf{Action:} \\
        \texttt{click[Waterbridge Wave Milk Chocolate Hazelnut Crunch]}
        
        \vspace{3mm}
        \textbf{[STEP 9]}\\
        \textbf{Observation:} View product page for Waterbridge Wave... \\
        \textbf{Action:} \texttt{click[Add to Cart]}
               
        \vspace{3mm}
        \textbf{[STEP 10]}\\
        \textbf{Observation:} No state transition.\\
        \textbf{Action:} \\
        \texttt{click[CRAVINGS BY ZOE Valentine’s Day Chocolate Truffles]}\\
        \vfill
        \hrule \vspace{1mm}
        \textbf{Environment Reward: 0.0 (Failure)}
        
    \end{tcolorbox}
\end{minipage}
\label{fig:example_webshop}
\end{figure}

\begin{figure}[H]
\centering
\caption{Comparison of agent trajectories in SciWorld. We show the divergence at the resampling step 12, where the value function distinguishes between promising and non-promising ones.}
\small
\begin{minipage}{0.49\textwidth}
    \begin{tcolorbox}[colback=white, colframe=black, arc=0mm, title=Case 1: Promising (Success), height=18.5cm]
        \textbf{Goal:} Your task is to find a(n) non-living thing. First, focus on the thing. Then, move it to the red box in the bathroom.\\
        This room is called the hallway. In it, you see:\\ 
	    the agent, a substance called air, a finger painting\\
        You also see:\\
    	A door to the kitchen (that is closed)\\
        
        \dots (Multiple doors to alternative locations are visible, but a door for the bathroom is absent)        
        \vspace{1mm} \hrule \vspace{2mm}
        \textbf{[STEP 1]}\\
        \textbf{Action:} \texttt{Focus on the finger painting}
        
        \vspace{3mm}
        \textbf{[STEP 2]}\\ 
        \textbf{Observation:} You focus on the finger painting. \\        
        \textbf{Action:} \texttt{pick up the finger painting}\\

        \dots (Explore the hallway)

        \vspace{3mm}
        \textbf{[STEP 9]}\\ 
        \textbf{Observation:} You move through the door to the kitchen.\\  
        \textbf{Action:} \texttt{look around}

        \vspace{3mm}
        \textbf{[STEP 10]}\\ 
        \textbf{Observation:} The description of the kitchen. There is a door to the bathroom (that is closed)\\        
        \textbf{Action:} \texttt{go to LOC (bathroom)}

        \vspace{3mm}
        \textbf{[STEP 11]}\\ 
        \textbf{Observation:} No known action matches that input.\\        
        \textbf{Action:} \texttt{open bathroom door}

        \vspace{3mm}
        \textbf{[STEP 12]}\\ 
        \textbf{Observation:} The door is now open.\\   
        \textbf{Value Function: 0.4}\\
        \textbf{Action:} \texttt{look around}\\

        \dots (Explore the kitchen)

        \vspace{3mm}
        \textbf{[STEP 14]}\\ 
        \textbf{Observation:} You move through the door to the bathroom.\\    
        \textbf{Action:} \texttt{look around}

        \vspace{3mm}
        \textbf{[STEP 15]}\\ 
        \textbf{Observation:} The description of the bathroom. There is a red box (containing nothing)\\        
        \textbf{Action:} \texttt{move finger painting to red box}\\        
        \vfill
        \hrule \vspace{1mm}
        \textbf{Environment Reward: 1.0 (Success)}
        
    \end{tcolorbox}
\end{minipage}
\hfill
\begin{minipage}{0.49\textwidth}
    \begin{tcolorbox}[colback=white, colframe=black, arc=0mm, title=Case 2: Non-Promising (Failure), height=18.5cm]
        \textbf{Goal:} Your task is to find a(n) non-living thing. First, focus on the thing. Then, move it to the red box in the bathroom.\\
        This room is called the hallway. In it, you see:\\ 
	    the agent, a substance called air, a finger painting\\
        You also see:\\
    	A door to the kitchen (that is closed)\\
        
        \dots (Multiple doors to alternative locations are visible, but a door for the bathroom is absent)       
        \vspace{1mm} \hrule \vspace{2mm}
        \textbf{[STEP 1]}\\
        \textbf{Action:} \texttt{look around}
        
        \vspace{3mm}
        \textbf{[STEP 2]}\\ 
        \textbf{Observation:} The description of the hallway. \\        
        \textbf{Action:} \texttt{go to LOC (bathroom)}

        \vspace{3mm}
        \textbf{[STEP 3]}\\ 
        \textbf{Observation:} No known action matches that input.\\  
        \textbf{Action:} \texttt{go to LOC (bathroom)}\\

        \dots (Redundant actions attempted to access the bathroom)

        \vspace{3mm}
        \textbf{[STEP 12]}\\ 
        \textbf{Observation:} No known action matches that input.\\  
        \textbf{Value Function: 0.1}\\
        \textbf{Action:} \texttt{go to LOCATION bathroom}\\

        \dots (Redundant actions attempted to access the bathroom)

        \vspace{3mm}
        \textbf{[STEP 17]}\\ 
        \textbf{Observation:} No known action matches that input.\\    
        \textbf{Action:} \texttt{look around}

        \vspace{3mm}
        \textbf{[STEP 18]}\\ 
        \textbf{Observation:} The description of the hallway.\\    
        \textbf{Action:} \texttt{look at finger painting}
        
        \vspace{3mm}
        \textbf{[STEP 19]}\\ 
        \textbf{Observation:} a finger painting of a cow. The artist is listed as Owen.\\    
        \textbf{Action:} \texttt{move finger painting to bathroom}

        \vspace{3mm}
        \textbf{[STEP 20]}\\ 
        \textbf{Observation:} No known action matches that input.\\        
        \textbf{Action:} \texttt{pick up finger painting}\\        
        \vfill
        \hrule \vspace{1mm}
        \textbf{Environment Reward: 0.0 (Failure)}
    \end{tcolorbox}
\end{minipage}
\label{fig:example_sciworld}
\end{figure}

\begin{figure}[H]
\centering
\caption{Comparison of agent trajectories in TextCraft. We highlight the divergence at the resampling step 4, where the value function differentiates between more promising and less promising trajectories.}
\small
\begin{minipage}{0.49\textwidth}
    \begin{tcolorbox}[colback=white, colframe=black, arc=0mm, title=Case 1: Promising (Success), height=19.5cm]
        Crafting commands: \\
        craft 1 lever using 1 cobblestone, 1 stick \\
        craft 6 andesite slab using 3 andesite \\
        craft 6 andesite wall using 6 andesite \\ 
        craft 6 cobblestone wall using 6 cobblestone \\
        craft 2 diorite using 2 quartz, 2 cobblestone \\
        craft 4 andesite stairs using 6 andesite \\
        craft 1 granite using 1 diorite, 1 quartz \\
        craft 1 mossy cobblestone using 1 cobblestone, 1 vine \\
        craft 1 quartz block using 4 quartz \\
        craft 1 dropper using 1 redstone, 7 cobblestone \\
        craft 1 dispenser using 1 redstone, 7 cobblestone, 1 bow \\
        craft 4 diorite stairs using 6 diorite \\
        craft 2 andesite using 1 diorite, 1 cobblestone \\
        \textbf{Goal:} craft andesite wall.
        \vspace{1mm} \hrule \vspace{2mm}
        
        \textbf{[STEP 1]}\\
        \textbf{Action:} \\
        \texttt{Inventory} \\
        \\
        \textbf{[STEP 2]}\\
        \textbf{Observation:} \\
        \texttt{Inventory: You are not carrying anything.} \\
        \textbf{Action:} \\
        \texttt{Get 2 quartz \\ Get 2 cobblestone} \\
        \\
        \textbf{[STEP 3]}\\
        \textbf{Observation:} \\
        \texttt{Got 2 quartz.} \\
        \textbf{Action:} \\
        \texttt{Get 2 cobblestone} \\
                \\
        \textbf{[STEP 4]}\\
        \textbf{Observation:} \\
        \texttt{Got 2 cobblestone.} \\
        \textbf{Value Function: 0.275} \\ 
        \textbf{Action:} \\
        \texttt{craft 2 diorite using 2 quartz, 2 cobblestone}

        \dots \\ \\
        \textbf{[STEP 16]}\\
        \textbf{Observation:}  \\
        \texttt{Could not execute [no action].} \\
        \\ 
        \textbf{Action:} \\
        \texttt{craft 6 andesite wall using 6 andesite} \\ \\
        \hrule \vspace{1mm}
        \textbf{Environment Reward: 1.0 (Success)}
        
    \end{tcolorbox}
\end{minipage}
\hfill
\begin{minipage}{0.49\textwidth}
    \begin{tcolorbox}[colback=white, colframe=black, arc=0mm, title=Case 2: Non-Promising (Failure), height=19.5cm]
        Crafting commands: \\
        craft 1 lever using 1 cobblestone, 1 stick \\
        craft 6 andesite slab using 3 andesite \\
        craft 6 andesite wall using 6 andesite \\ 
        craft 6 cobblestone wall using 6 cobblestone \\
        craft 2 diorite using 2 quartz, 2 cobblestone \\
        craft 4 andesite stairs using 6 andesite \\
        craft 1 granite using 1 diorite, 1 quartz \\
        craft 1 mossy cobblestone using 1 cobblestone, 1 vine \\
        craft 1 quartz block using 4 quartz \\
        craft 1 dropper using 1 redstone, 7 cobblestone \\
        craft 1 dispenser using 1 redstone, 7 cobblestone, 1 bow \\
        craft 4 diorite stairs using 6 diorite \\
        craft 2 andesite using 1 diorite, 1 cobblestone \\
        \textbf{Goal:} craft andesite wall.
        \vspace{1mm} \hrule \vspace{2mm}
        
        \textbf{[STEP 1]}\\
        \textbf{Action:} \\ \texttt{Inventory} \\
        \\
        \textbf{[STEP 2]}\\
        \textbf{Observation:} \\
        \texttt{Inventory: You are not carrying anything.} \\
        \textbf{Action:} \\ 
        \texttt{get 2 diorite} \\
        \\
        \textbf{[STEP 3]}\\
        \textbf{Observation:} \\
        \texttt{Could not find diorite.} \\
        \textbf{Action:} \\ \texttt{Get 2 quartz} \\
                \\
        \textbf{[STEP 4]}\\
        \textbf{Observation:} \\
        \texttt{Got 2 quartz.} \\
        \textbf{Value Function: 0.191} \\
        \textbf{Action:} \\ \texttt{Get 2 cobblestone}

        \dots \\ \\
        \textbf{[STEP 20]}\\
        \textbf{Observation:}  \\
        \texttt{Crafted 2 minecraft:andesite.} \\
        \textbf{Action:} \texttt{get 1 diorite} \\ \\
        \hrule \vspace{1mm}
        \textbf{Environment Reward: 0.0 (Failure)}
        
    \end{tcolorbox}
\end{minipage}
\label{fig:example_textcraft}
\end{figure}


\end{document}